\def\eqref#1{equation~\ref{#1}}
\def\1{\bm{1}}
\DeclareMathAlphabet{\mathsfit}{\encodingdefault}{\sfdefault}{m}{sl}
\SetMathAlphabet{\mathsfit}{bold}{\encodingdefault}{\sfdefault}{bx}{n}
\newcommand{\E}{\mathbb{E}}
\newtheorem{theorem}{Theorem}
\newtheorem{observation}{Observation}
\newcommand{\methodname}{EOI}
\newcommand{\longname}{\textbf{E}xact \textbf{O}rthogonal \textbf{I}nitialization}
\icmltitlerunning{Sparser, Better, Deeper, Stronger: Improving Sparse Training with Exact Orthogonal Initialization}
\newcommand\newsubcap[1]{\phantomcaption%
       \caption*{\figurename~\thefigure\thesubfigure: #1}}
\begin{document}

\twocolumn[
\icmltitle{Sparser, Better, Deeper, Stronger:\\Improving Static Sparse Training with Exact Orthogonal Initialization}

% It is OKAY to include author information, even for blind
% submissions: the style file will automatically remove it for you
% unless you've provided the [accepted] option to the icml2024
% package.

% List of affiliations: The first argument should be a (short)
% identifier you will use later to specify author affiliations
% Academic affiliations should list Department, University, City, Region, Country
% Industry affiliations should list Company, City, Region, Country

% You can specify symbols, otherwise they are numbered in order.
% Ideally, you should not use this facility. Affiliations will be numbered
% in order of appearance and this is the preferred way.
\icmlsetsymbol{equal}{*}

\begin{icmlauthorlist}
\icmlauthor{Aleksandra Irena Nowak}{uj,doctoral,ideas}
\icmlauthor{Łukasz Gniecki}{uj}
\icmlauthor{Filip Szatkowski}{pw,ideas}
\icmlauthor{Jacek Tabor}{uj}
\end{icmlauthorlist}

% \icmlaffiliation{yyy}{Department of XXX, University of YYY, Location, Country}
\icmlaffiliation{pw}{Warsaw University of Technology}
\icmlaffiliation{uj}{Jagiellonian University, Faculty of Mathematics and Computer Science}
\icmlaffiliation{doctoral}{Jagiellonian University, Doctoral School of Exact and Natural Sciences}
\icmlaffiliation{ideas}{IDEAS NCBR}

\icmlcorrespondingauthor{Aleksandra Irena Nowak}{nowak.aleksandrairena@gmail.com}

% You may provide any keywords that you
% find helpful for describing your paper; these are used to populate
% the "keywords" metadata in the PDF but will not be shown in the document
\icmlkeywords{Sparse Training, Pruning at Initialization, Orthogonal Initialization}

\vskip 0.3in
]

% this must go after the closing bracket ] following \twocolumn[ ...

% This command actually creates the footnote in the first column
% listing the affiliations and the copyright notice.
% The command takes one argument, which is text to display at the start of the footnote.
% The \icmlEqualContribution command is standard text for equal contribution.
% Remove it (just {}) if you do not need this facility.

\printAffiliationsAndNotice{}  % leave blank if no need to mention equal contribution
% \printAffiliationsAndNotice{\icmlEqualContribution} % otherwise use the standard text.

\begin{abstract}
Static sparse training aims to train sparse models from scratch, achieving remarkable results in recent years. A key design choice is given by the sparse initialization, which determines the trainable sub-network through a binary mask. Existing methods mainly select such mask based on a predefined dense initialization. Such an approach may not efficiently leverage the mask's potential impact on the optimization. An alternative direction, inspired by research into dynamical isometry, is to introduce orthogonality in the sparse subnetwork, which helps in stabilizing the gradient signal.  In this work, we propose Exact Orthogonal Initialization (EOI), a novel sparse orthogonal initialization scheme based on composing random Givens rotations. Contrary to other existing approaches, our method provides exact (not approximated) orthogonality and enables the creation of layers with arbitrary densities. 
We demonstrate the superior effectiveness and efficiency of EOI through experiments, consistently outperforming common sparse initialization techniques. Our method enables training highly sparse 1000-layer MLP and CNN networks without residual connections or normalization techniques, emphasizing the crucial role of weight initialization in static sparse training alongside sparse mask selection.
\end{abstract}

\section{Introduction}

Neural Network compression techniques have gained increased interest in recent years in light of the development of even larger deep learning models consisting of enormous numbers of parameters. One popular solution to decrease the size of the model is to remove a portion of the parameters of the network based on some predefined criterion, a procedure known as pruning. Classical network pruning is typically performed after the training and is often followed by a fine-tuning phase~\citep{lecun1989optimal, han2015learning, molchanov2016pruning}). Although such an approach decreases the memory footprint during the inference, it still requires training the dense model in the first place.  

More recently,~\citet{frankle2018lottery} showed the existence of \emph{lottery tickets} --- sparse subnetworks that are able to recover the performance of the dense model when trained from scratch. Subsequent works have been devoted to understanding the properties of such initializations~\citep{zhou2019deconstructing, malach2020proving,  frankle2020linear, evci2022gradient, mcdermott2023linear} and proposing solutions that allow to obtain the sparse masks without the need to train the dense model. This gave birth to the field of sparse training (ST), which focuses on studying sparse deep neural network architectures that are already pruned at initialization -- see e.g.~\citet{wang2021recent} for a survey. Such \emph{sparse initializations} can be obtained either through weight-agnostic methods (e.g. random pruning~\citep{liu2022unreasonable}, Erd\H{o}s-R\'{e}nyi initialization~\citep{mocanu2018scalable}), or through the use of a predefined criterion dependent on the training data or the model parameters~\citep{lee2019snip, wang2020picking, tanaka2020pruning}.

The term \emph{static sparse training} (also known as pruning at initialization) refers to a version of ST in which a model is pruned at initialization, and the mask is kept fixed throughout the whole training.~\citep{lee2019snip, liu2023ten}. Interestingly, even randomly removing a significant portion of the parameters at the initialization does not hurt the optimization process~\citep{liu2022unreasonable, malach2020proving}. Furthermore, ~\citet{frankle2020pruning}, alongside ~\citet{su2020sanity}, demonstrated that the pruning at initialization techniques are insensitive to a random reshuffling or reinitialization of the unpruned weights within each layer. However, at the same time they achieve lower accuracy than the sparse networks obtained by pruning after the full dense training~\citet{frankle2020pruning}. Recently,~\citep{pham2023towards} argue that the resilience to reshuffling depends on the input-output connectivity.  In extremely high sparsities, when the reshuffling may imply a decrease in the number of effective input-output paths in the model, the performance drops.
%However, at the same time, they achieve lower accuracy than the sparse networks obtained by pruning after the full dense training. 
All these results motivate the research into understanding what properties make a sparse initialization successful.

In dense deep neural networks models, the topic of identifying a successful initialization technique has been an intensive area of research (see, for instance,~\citet{narkhede2022review} for a review). One direction in this domain is given by analyzing models that attain \emph{dynamical isometry} - have all the singular values of the input-output Jacobian close to $1$~\citep{saxe2013exact}. For some activation functions, this property can be induced by assuring orthogonal initialization of the weights. Networks fulfilling this condition benefit from stable gradient signal and are trainable up to thousands of layers without the use of residual connections or normalization layers~\citep{pennington2017resurrecting}. Interestingly, ~\citet{lee2019signal} reported that inducing orthogonality can also stabilize the optimization process in static sparse training, a result confirmed by ~\citet{esguerra2023sparsity}. However, these approaches either use only approximated isometry or are restricted in terms of compatible architectures and per-layer sparsity levels~\citep{esguerra2023sparsity}.

In this work, we propose a new orthogonal sparse initialization scheme called \longname{} (\methodname{}) that joins the benefits of exact (not approximated) isometry with the full flexibility of obtaining any sparsity level. Inspired by the work in maximally sparse matrices~\citep{cheon2003sparse} and sparse matrix decomposition~\citep{george1987householder, frerix2019approximating}, our method constructs the initial weights by iteratively multiplying random Givens rotation matrices. As a consequence, we sample both the mask and the initial weights at the same time. Furthermore, multiplication by an $n \times n$ Givens matrix can be performed in $O(n)$ time, which keeps the sampling process efficient. In particular, our contributions are:

\begin{itemize}[nosep]
    \item We introduce \methodname{} (\longname{}), a flexible data-agnostic sparse initialization scheme based on an efficient method of sampling random sparse orthogonal matrices by composing Givens rotations. EOI provides \emph{exact} orthogonality and works both for fully-connected and convolutional layers. 
    \item We confirm that networks initialized with our method benefit from \emph{dynamical isometry}, successfully training highly sparse vanilla 1000-layer MLPs and CNNs.
    \item We validate that imposing orthogonality in static sparse training by the use of \methodname{} noticeably improves performance of many contemporary networks, outperforming standard pruning at initialization approaches and approximate isometry.  
\end{itemize}

Our results suggest that introducing orthogonality in sparse initializations practically always improves the performance or stability of the learning, even if the underlying network already benefits from signal regularization techniques such as residual connections or normalization layers. As a consequence, we hope that our research will raise the community's awareness of the importance of simultaneously studying both the mask and weight initialization in the domain of sparse training. The code is available at \url{https://github.com/woocash2/sparser-better-deeper-stronger}.
\vskip -0.1in

\begin{figure*}[ht]
    \centering
    \includegraphics[width=0.7\linewidth]{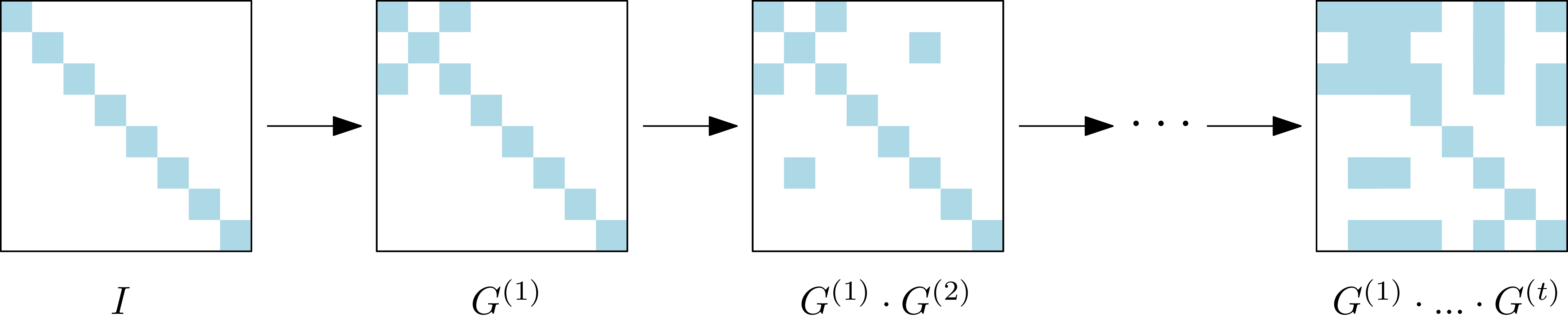}
    \caption{Sparse orthogonal matrix generation via composition of Givens rotations.}
    \label{fig:visual}
    \vskip -0.2in
\end{figure*}

\section{Related Work}
\vskip -0.05in
\paragraph{Orthogonal Initialization and Dynamic Isometry}

The impact of the initialization procedure on the neural network training dynamics and performance is a prominent area of research in deep learning~\citep{glorot2010understanding, sutskever2013importance, daniely2016toward, poole2016exponential, narkhede2022review}. One studied direction in this topic is the orthogonal initialization. In particular, ~\citet{saxe2013exact} derived exact solutions for the dynamics of deep linear networks, showing the importance of requiring the singular values of the input-output Jacobian to be equal to one, a property known as \emph{dynamical isometry}. Those results have been later extended to nonlinear networks with tanh activations by ~\citet{pennington2017resurrecting}, using the knowledge from the mean field theory~\citep{poole2016exponential, mei2019mean, schoenholz2016deep} and from the free random variables calculus~\citep{voiculescu1995free}.  In addition, ~\citet{xiao2018dynamical} and ~\citet{chen2018dynamical} studied the dynamical isometry property in CNNs and Recurrent Neural Networks, while ~\citet{tarnowski2019dynamical} showed that for ResNets, dynamical isometry can be achieved naturally irrespective of the activation function. However, without skip connections or parameter sharing solutions, dynamical isometry cannot be achieved in networks with ReLU activations~\citep{burkholz2019initialization}. The practical speedups coming from the use of orthogonal initialization have been also proved in the work of~\citet{hu2020provable}. Moreover, the effectiveness of orthogonal initialization has led to many constructions developing orthogonal convolutional layers~\citep{xiao2018dynamical, li2019preventing, wang2020orthogonal, singla2021skew, trockman2021orthogonalizing, yu2021constructing}. In addition, previous studies examined signal propagation in dense or densely structured subnetworks with various regularization methods~\cite{larsson2016fractalnet, sun2022low, shulgin2023towards}. The impact of optimizing sparse weight subsets was explored by \citep{thakur2022incremental}.

\paragraph{Static Sparse Training}
The field of static sparse training focuses on pruning the networks at initialization. One type of methods used in this area are randomly pruned initializations, which exhibit surprisingly good performance~\citep{liu2022unreasonable, pensia2020optimal}. Another widely used approach involves integrating weight-based knowledge with data-dependent information, like gradients, to formulate scoring functions for pruning less crucial weights. Examples include  SNIP~\citep{lee2019snip}, which approximates the pre and post-pruning loss, GraSP~\citep{wang2020picking} for maintaining gradient flow; and Synflow~\citep{tanaka2020pruning}, which prevents layer collapse by using path norm principles~\citep{neyshabur2015path}. In the light of this rapid development, several works aim at understanding the factors of effective sparse initializations \citep{frankle2020pruning, su2020sanity}. In particular, ~\citet{lee2019signal} investigate the sparse training from the signal propagation perspective. However, their proposed method is only able to approximate orthogonality. In contrast, the recent work by \citet{esguerra2023sparsity} demonstrates an exact-orthogonal scheme for sparse initializations, but it offers only discrete sparsity levels and isn't directly applicable to modern architectures. In comparison, our \methodname{} initialization provides exact orthogonality and offers full flexibility in achieving various sparsity levels, combining improved performance with ease of use.

\section{Preliminaries}
\label{section:preliminaires}
\subsection{Dynamical Isometry}
\label{section:dynamical_isometru}
Consider a fully connected network with $L$ layers with base parameters $\mathbf{W}^l\in \mathbb{R}^{n\times n}$, where $l=1,...,L$. Let $\mathbf{x}^0$ denote the input to the network. The dynamics of the model can be described as: $\mathbf{x}^{l} = \phi(\mathbf{W}^l\mathbf{x}^{l-1}+\mathbf{b}^l)$, where $\phi$ denotes the activation function. As a consequence, the input-output Jacobian of the network is given by~\citep{pennington2017resurrecting}
\begin{align}
    & \mathbf{J}=\frac{\partial\mathbf{x}^L}{\partial\mathbf{x}^0}=\prod_{l=1}^{L}\mathbf{D}^l\mathbf{W}^l, \text{where} \\
    & \mathbf{D}^{l}_{i,j}=\phi'(\mathbf{h}^l_i)\delta_{i,j} \text{ and } \mathbf{h}^l = \mathbf{W}^l\mathbf{x}^{l-1}+\mathbf{b}^l \nonumber
\end{align}

The network attains \emph{dynamical isometry} if \emph{all} the singular values of $\mathbf{J}$ are close to $1$ for any input chosen from the data distribution. For linear networks, this condition is naturally fulfilled when each $\mathbf{W}^l$ is orthogonal~\citep{saxe2013exact}.

In the case of nonlinear networks, consider a scaled orthogonal initialization defined as $(\mathbf{W}^{l})^{T}\mathbf{W}^l = \sigma_w^2\mathbf{I}_n$, and $\mathbf{b}^l \sim \mathcal{N}(0,\sigma_b^2)$, where $\mathbf{I}_n$ denotes the identity matrix of size~$n$. 
According to the mean field theory, in the large $n$ limit the empirical distribution of the preactivation converges to a Gaussian with zero mean and variance that approaches a fixed point $q_{fix}$~\citep{schoenholz2016deep}. Let $\mathcal{X}$ denote the mean of the distribution of the squared singular values of matrices $\mathbf{DW}$, assuming the preactivation distribution reaches the fixed point $q_{fix}$. The network is said to be \emph{critically initialized} if $\mathcal{X}(\sigma_w, \sigma_b)=1$, which results in gradients that neither vanish nor explode. For tangent activations the equation $\mathcal{X}(\sigma_w, \sigma_b)=1$ defines a line in the $(\sigma_w, \sigma_b)$ space. Moving along this line, dependable on the depth $L$, allows one to restrict the spectrum of singular values of $\mathbf{J}$, effectively achieving dynamical isometry~\citep{pennington2017resurrecting}. 
Although the above-discussed research focused on dense models in infinite depth and width limit, recent studies have shown that influencing orthogonality also in sparse networks can lead to improved performance~\citep{lee2019signal, esguerra2023sparsity}. 

\subsection{Orthogonal Convolutions}
\label{sec:ortho_convs}
Analogously to standard linear orthogonal layers, in convolutional layers, orthogonality is defined by ensuring that $\mathbf{||W * x||_2 = ||x||_2}$ holds for any input tensor $\mathbf{x}$, where $\mathbf{W}\in \mathbb{R}^{c_{out}\times{c_{in}}\times(2k+1)\times(2k+1)}$ represents the convolutional weights, and $*$ denotes the convolution operation~\citep{xiao2018dynamical}. Some examples of initializations fulfilling this condition are the \emph{delta-orthogonal} initialization~\citep{xiao2018dynamical}, the Block Convolutional Orthogonal (BCOP) initialization~\citep{li2019preventing}, or the Explicitly Constructed Orthogonal (ECO) convolutions~\citep{yu2021constructing}. In this work, we focus on the use of the \emph {delta-orthogonal} method, due to its pioneering impact on the study of dynamical isometry in CNNs, as well as a relatively simple and intuitive construction. The \emph {delta-orthogonal} approach embeds an orthogonal matrix $\mathbf{H}$ into the central elements of the convolutional layer's kernels, while setting all other entries to zero. Formally, the convolution weights $\mathbf{W}$ can be expressed as 
% $\mathbf{W_{ijkk} = H_{ij}}$
$\mathbf{W}_{i,j,k,k} = \mathbf{H}_{i,j}$
,where $\mathbf{H}$ is a randomly selected dense orthogonal matrix with shape $c_{out} \times c_{in}$. 

Let us also note that the standard convolution operation is in practice implemented by multiplying the kernel $\mathbf{W}$ with the \emph{im2col} representation of the data~\citep{heide2015fast, PytorchCONV}. Some methods explored solutions that penalize the distance $||\mathbf{W}^T\mathbf{W}-\mathbf{I}||_2$, stabilizing the Jacobian of such multiplication~\citep{xie2017all, balestriero2018mad}. However, this approach doesn't necessarily guarantee orthogonality in terms of norm preservation, as it does not account for the change of the spectrum introduced by applying the \emph{im2col} transformation~\citep{wang2020orthogonal}. 

\subsection{Static Sparse Training} 
\label{section:sparsetraining}

In static sparse training (SST), the network is pruned at \emph{initialization} and its structure is kept intact throughout the training process. Let $M$ denote the binary mask with zero-entries representing the removed parameters. The density $d$ of the model is defined as $d=||M||_0/m$, where $||\cdot||_0$  is the $L_0$-norm and $m$ is the total number of parameters in the network. As a result, the sparsity $s$ of the model can be calculated as $s=1-d$. Different static sparse training methods vary in how they select per-layer masks $\mathbf{M}^l$ and, consequently, per-layer densities $d_l$. Note that we require that $d=\sum_l^{L} (d_lm_l)/m$, where $m_l$ denotes the total number of parameters in layer $l$. Below, we summarize some common SST approaches studied within this work:

\textbf{Uniform} - In this simple method each layer is pruned to density $d$ by randomly removing $(1-d)n_l$ parameters, where $n_l$ denotes the $l$-th layer's size. 

\textbf{Erd\H{o}s-R\'{e}nyi (ERK)} - This approach randomly generates the sparse masks so that the density in each layer $d^l$ scales as $\frac{n^{l-1}+n^l}{n^{l-1}n^l}$ for a fully-connected layer~\citep{mocanu2018scalable} and as $\frac{n^{l-1}+n^l+w^l+h^l}{n^{l-1}n^lw^lh^l}$ for a convolution with kernel of width $w^l$ and~height~$h^l$~\citep{evci2020rigging}. 

\textbf{SNIP} - In this method each parameter $\theta$ is assigned a score $\rho(\theta) = |\theta\nabla_{\theta} \mathcal{L}(D)|$, where $\mathcal{L}$ is the loss function used to train the network on some data $D$. Next, a fraction of $s$ parameters with the lowest score is removed from the network~\citep{lee2019snip}. 

\textbf{GraSP} - The Gradient Signal Preservation (GraSP) score is given by $\rho(-\theta)=-\theta \odot \mathbf{G} \nabla_{\theta}\mathcal{L}(D)$, where $\mathbf{G}$ is the Hessian matrix, and $\odot$ denotes the element-wise multiplication. Next, the top $s$ parameters with the highest scores are removed from the network~\citep{wang2020picking}. 

\textbf{Synflow} - This approach proposes an iterative procedure, where the weights are scored using $\rho(\theta)=\nabla_{\theta}R_{SF}\odot \theta$. The term $R_{SF}$ is a loss function expressed as $R_{SF}=\mathbbm{1}^T\prod_{l=1}^{L}|\theta^{l}|\mathbbm{1}$, where $\mathbbm{1}$ is the all-ones vector, $|\theta^{l}|$ is the element-wise absolute value of all the parameters in layer $l$, and $L$ is the total number of layers~\citep{tanaka2020pruning}.  

Note that every static sparse training method automatically defines also a \emph{sparse initialization}, which is the element-wise multiplication of the weights with their corresponding masks. It has been shown that, in non-extreme sparsities ($\le$ 99\%), pruning at initialization methods are invariant to the re-initialization of the parameters or the reshuffling of the masks within a layer~\citep{frankle2020pruning, su2020sanity}. Consequently, one can treat them as a source of the per-layer densities $d_1, \ldots, d_L$, inferred from their produced masks by computing $d_l = ||\mathbf{M}_l||/m_l$. Throughout this work we will therefore also refer to the aforementioned methods as \emph{density distribution} algorithms. Finally, our goal is to also compare the proposed \methodname{} scheme with other sparse initialization schemes. In particular, we investigate:

\textbf{Approximated Isometry} -  Given the per layer masks $\mathbf{M}^{l}$ form a density distribution algorithm, the Approximated Isometry (AI) scheme optimizes the weights in the masks by minimizing the orthogonality loss $||(\mathbf{W}^{l}\odot \mathbf{M}^{l})^T(\mathbf{W}^{l}\odot\mathbf{M}^{l})-\mathbf{I}||_2$ with respect to layer weights $\mathbf{W}^{l}$. Although simple in design, such an approach may suffer from slow optimization procedure and approximation errors. Furthermore, as discussed in section \ref{sec:ortho_convs}, the used by AI orthogonality loss is ill-specified in terms of convolutions. 

\textbf{SAO} - The Sparsity-Aware Orthogonal initialization (SAO) scheme constructs a sparse matrix by first defining the non-zero mask structure via a bipartite Ramanujan graph. Next, the obtained structural mask is assigned orthogonal weights, to assure isometry. In the case of convolutional layers, SAO performs structural-like pruning, by sampling the middle orthogonal matrix of the delta orthogonal initialization and removing filters with only zero entries. The drawback of SAO is that it constrains the possible dimensionality of the weights and inputs. Furthermore, it supports only discrete increases in the sparsity levels. Consequently, this scheme cannot be adapted to arbitrary architectures. See \Cref{app:sao} for detailed description of SAO.

\subsection{Givens matrices}
A Givens rotation is a linear transformation that rotates an input in the plane spanned by two coordinate axes. The Givens matrix of size $n$ for indices $1 \leq i < j \leq n$ and angle $0 \leq \varphi < 2\pi$, denoted as $G_n(i, j, \varphi)$, has a structure similar to identity matrix of size $n$, with only four entries overwritten: $G_n(i, j, \varphi)_{ii} = G_n(i, j, \varphi)_{jj} = \cos \varphi$, $G_n(i, j, \varphi)_{ij} = -\sin \varphi$, and $G_n(i, j, \varphi)_{ji} = \sin \varphi$. Givens matrices can be randomly sampled by uniformly selecting a pair $(i, j)$ where $i < j$, along with an angle $\varphi$. Importantly, the multiplication of such a rotation with any matrix of size $n$ can be computed in $O(n)$ time. This property has made Givens matrices a popular choice in implementing linear algebra algorithms (e.g. QR decomposition).~\cite{olteanu2023givens, bindel2002computing}

\section{Exact Orthogonal Initialization}

In this work, we present a novel sparse orthogonal initialization scheme, called Exact Orthogonal Initialization (EOI) compatible with both fully-connected and convolutional layers. Our solution combines the advantages of ensuring exact isometry with complete flexibility in achieving arbitrary sparsity levels. 
This is accomplished by generating the sparse orthogonal matrices through a composition of random Givens rotations. %We start this section by explaining this construction and follow with a discussion of its use in static sparse training. 

\subsection{Random, sparse, orthogonal matrices} \label{section:random_sparse_orthogonal}

% A Givens rotation is a linear transformation that rotates an input in the plane spanned by two coordinate axes. The Givens matrix of size $n$ for indices $1 \leq i < j \leq n$ and angle $0 \leq \varphi < 2\pi$, denoted as $G_n(i, j, \varphi)$, has a structure similar to identity matrix of size $n$, with only four entries overwritten: $G_n(i, j, \varphi)_{ii} = G_n(i, j, \varphi)_{jj} = \cos \varphi$, $G_n(i, j, \varphi)_{ij} = -\sin \varphi$, and $G_n(i, j, \varphi)_{ji} = \sin \varphi$. Givens matrices can be randomly sampled by uniformly selecting a pair $(i, j)$ where $i < j$, along with an angle $\varphi$. Importantly, the multiplication of such a rotation with any matrix of size $n$ can be computed in $O(n)$ time. This property has made Givens matrices a popular choice in implementing linear algebra algorithms (e.g. QR decomposition). 

    % \vskip -0.1in % OBSERVATION 1 WAS TOO TIGHT

\begin{algorithm}[h]
\small
    \caption{\small{Generate a random $n \times n$ orthogonal matrix of target density $d$.}}
    \label{alg:ortho}
    \label{alg:EOI}
    \begin{algorithmic}[1]
    \REQUIRE $n \geq 2, d \in [0, 1]$
    \STATE{$A \gets I_n$}
    \COMMENT{identity matrix of size $n$}
    \WHILE{$\mathrm{dens(A)} < d$}
        \STATE Pick $(i, j)$ such that $1 \leq i < j \leq n$,
        \STATE \hspace{20pt} uniformly, at random.
        \STATE Pick $0 \leq \varphi < 2\pi$, 
        \STATE \hspace{20pt} uniformly, at random.
        \STATE $A \gets A \cdot G_n(i, j, \varphi)$
    \ENDWHILE
    \STATE \textbf{return:} $A$
    \end{algorithmic}

\end{algorithm}
    \vskip -0.1in % OBSERVATION 1 WAS TOO TIGHT

\begin{figure*}[ht]
    \centering\includegraphics[trim=260 0 240 100,clip,width=\textwidth]{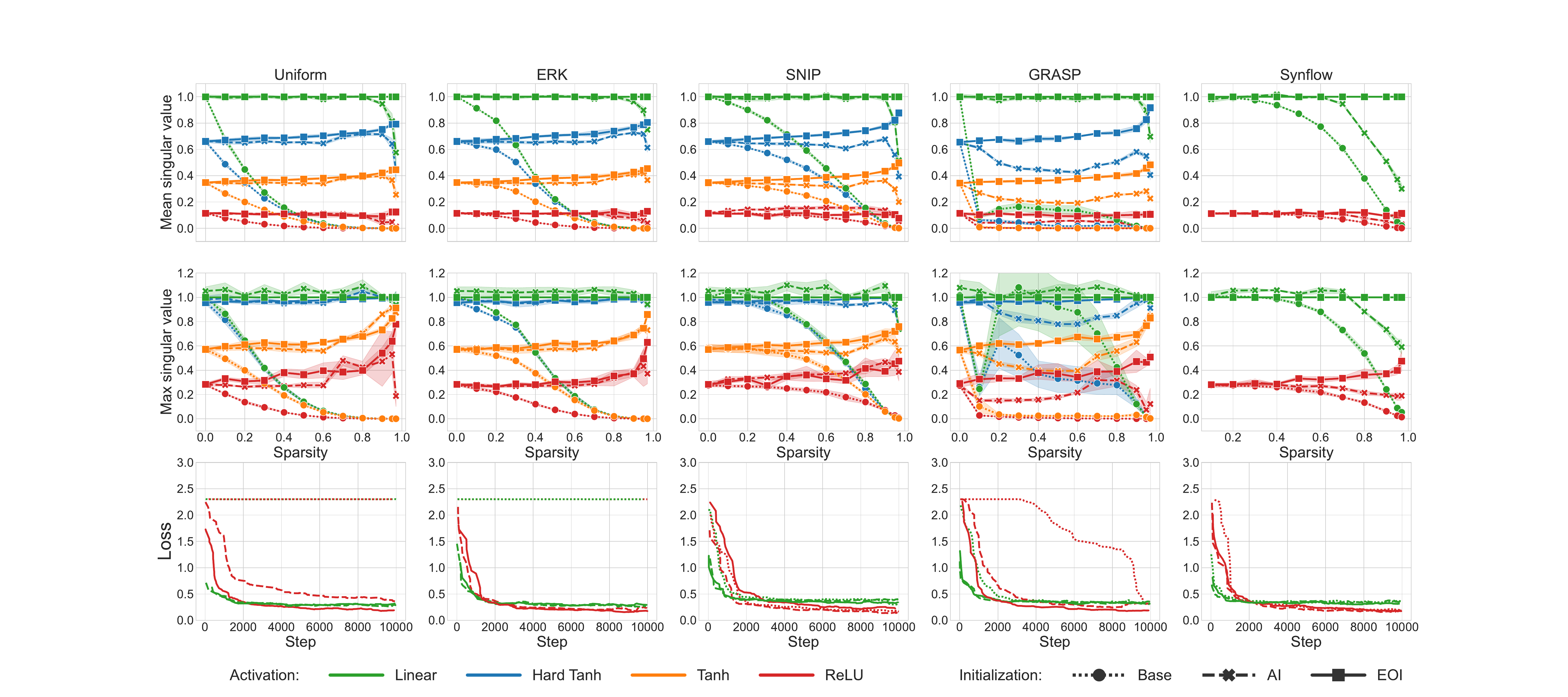}
    \caption{The mean (top row) and maximum (middle row) singular values of the input-output Jacobian of an MLP network computed for varying sparsity. In addition, we also present the training loss curve (bottom row) for sparsity 0.95. The colors indicate the used activation function, while the line- and marker-styles represent the initialization schemes. In the loss curve plots, for clarity of the presentation, we show only the ReLU and linear activation. See \Cref{app:losses} for other activations.}
    \label{fig:singular_values}
    \vskip -0.1in
\end{figure*}

Note that Givens matrices are orthogonal, hence their product is also orthogonal. Moreover, iteratively composing random Givens rotations produces denser outputs. Due to those facts, Givens matrices are commonly used in the study of (random) sparse orthogonal matrices~\citep{cheon2003sparse, george1987householder, MatLabCOnv}. In particular, by leveraging the above observations, we can sample an orthogonal matrix by initiating our process with an identity matrix $A\leftarrow I_n$, which represents the currently stored result. Then, we randomly select a Givens rotation $G_n(i,j,\varphi)$ and multiply it with the current result, substituting $A\leftarrow A\cdot G_n(i,j,\varphi)$. We repeat this operation until we achieve the desired target density $d$ of the matrix. We outline this approach in Algorithm~\ref{alg:EOI} and visualize the premise behind it in Figure~\ref{fig:visual}.

The running time of the proposed method depends on the number of passes we need to perform in line 2 of the Algorithm. This means that we are interested in how quickly the density of the resulting matrix $A$ increases with the number of random Givens multiplications $t$. We observe that:

\begin{observation} 
Let $A^{(t)}$ be a random $n \times n$ matrix representing the product of $t$ random, independent Givens matrices of size $n$. Then the expected density $\E[ \mathrm{dens}(A^{(t)}) ]$  of matrix   $A^{(t)}$ is given by:
\begin{equation}
    \E[ \mathrm{dens}(A^{(t)}) ] = \frac{1}{n} \cdot \sum_{k=1}^n k \cdot p(t, k), 
    \label{eq:expected_density}
\end{equation}
where $p(t, k)$ denotes the probability that a fixed row of $A^{(t)}$ will have $k$ non-zero elements. The values $p(t, k)$ are independent of the choice of the row, and are defined by the recurrent relation:
\begin{align} 
    p(t + 1, k + 1) = p(t, k + 1) \cdot 
    \frac{\binom{k + 1}{2} + \binom{n - k - 1}{2}}{\binom{n}{2}} + \nonumber \\ 
    p(t, k) \cdot \frac{k \cdot (n - k)}{\binom{n}{2}}
\end{align}
with base condition $p(0, 1) = 1$ and $p(0, k) = 0$ for $k \neq 1$.

\end{observation}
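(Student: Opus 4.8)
The plan is to reduce the global statement about $A^{(t)}$ to a one-dimensional Markov chain tracking the number of non-zeros in a single row. First I would fix a row index $r$ and let $S_r^{(t)} \subseteq \{1,\dots,n\}$ be the set of columns in which row $r$ of $A^{(t)}$ is non-zero, so that $S_r^{(0)} = \{r\}$ because $A^{(0)} = I_n$. If I can show that the law of the count $|S_r^{(t)}|$ is the same for every $r$ and equals $p(t,\cdot)$, then the density formula follows immediately from linearity of expectation: the expected total number of non-zeros is $\sum_{r=1}^n \E |S_r^{(t)}| = n\sum_{k=1}^n k\,p(t,k)$, and dividing by the $n^2$ entries of the matrix gives $\E[\mathrm{dens}(A^{(t)})] = \frac{1}{n}\sum_{k=1}^n k\,p(t,k)$.

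Next I would examine how one right-multiplication $A^{(t+1)} = A^{(t)} G_n(i,j,\varphi)$ acts on row $r$. Only columns $i$ and $j$ are touched, and the two affected entries become $A_{ri}\cos\varphi + A_{rj}\sin\varphi$ and $-A_{ri}\sin\varphi + A_{rj}\cos\varphi$, while every other entry is unchanged. Reading off the support, I distinguish three cases according to $|\{i,j\}\cap S_r^{(t)}|$: if both or neither of $i,j$ lie in $S_r^{(t)}$ the support is unchanged, whereas if exactly one of them lies in $S_r^{(t)}$ the vanishing entry becomes non-zero and $|S_r^{(t)}|$ grows by one. The count is therefore non-decreasing and moves from $k$ to $k+1$ precisely when the sampled pair straddles the current support.

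The delicate point — and what I expect to be the main obstacle — is justifying the phrase ``becomes non-zero'' in each case, i.e. ruling out accidental cancellations. For a generic angle the maps above send a non-zero vector to a non-zero vector, but one must argue this holds \emph{almost surely along the whole trajectory}, since the entries $A_{ri}^{(t)}$ are themselves random and correlated with the past angles. I would handle this by conditioning: let $\mathcal{F}_t$ be the $\sigma$-algebra generated by the first $t$ samples; given $\mathcal{F}_t$ and the pair $(i,j)$, each offending equation ($\cos\varphi=0$, $\sin\varphi=0$, or $A_{ri}\cos\varphi + A_{rj}\sin\varphi = 0$ with fixed non-zero coefficients) is solved by only finitely many $\varphi \in [0,2\pi)$, hence has conditional probability zero because $\varphi_{t+1}$ is independent and uniform. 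A union bound over the finitely many steps then shows that, almost surely, the support evolves by the deterministic rule above, so the actual numerical values of the non-zero entries never influence the count.

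Finally I would compute the transition probabilities. Given $|S_r^{(t)}| = k$ and a uniformly chosen pair, the number of pairs with both, exactly one, and neither endpoint in $S_r^{(t)}$ is $\binom{k}{2}$, $k(n-k)$, and $\binom{n-k}{2}$, summing to $\binom{n}{2}$; crucially these depend only on $k$, not on the particular set $S_r^{(t)}$ nor on $r$, which yields both the claimed row-independence and a genuine Markov chain. The probability of staying at $k$ is $\big(\binom{k}{2}+\binom{n-k}{2}\big)/\binom{n}{2}$ and of advancing to $k+1$ is $k(n-k)/\binom{n}{2}$. Writing $p(t+1,k+1)$ as the probability of being at $k+1$ after either being at $k+1$ and staying, or being at $k$ and advancing, at step $t$ gives exactly the stated recurrence, while $S_r^{(0)}=\{r\}$ supplies the base condition $p(0,1)=1$ and $p(0,k)=0$ for $k\neq 1$.
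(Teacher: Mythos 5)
Your proposal is correct and follows essentially the same route as the paper's proof: reduce to a single row, observe that a Givens multiplication changes only the two sampled coordinates and that the non-zero count stays put or increments by one according to whether the pair straddles the support, count pairs to get transitions depending only on $k$, and recover the density formula by symmetry and linearity of expectation. Your conditioning argument ruling out accidental cancellations is a welcome elaboration of the paper's bare ``with probability 1.''
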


For proof refer to Appendix \ref{app:givens}. Since the above result is not immediately intuitive, we illustrate the expected density computed via Equation~\ref{eq:expected_density} for a matrix of size $100 \times 100$ in Figure~\ref{fig:expected_density} in the Appendix. Our mathematical derivations perfectly match the empirical observations. The density smoothly rises with the number of used rotations, being initially convex up to a critical point (here it is around $t\approx270$), and then becoming concave, which indicates a sigmoidal relation. Therefore achieving high densities would necessitate a substantial number of iterations. However, in SST our focus typically lies within highly-sparse regions, which often do not exceed a density of 50\%. This keeps the algorithm very efficient.

% \begin{figure*}[htp]
% \centering
% \begin{adjustbox}{valign=t}
% \begin{minipage}{0.46\textwidth}
%     \centering
%     \includegraphics[width=0.8\linewidth]{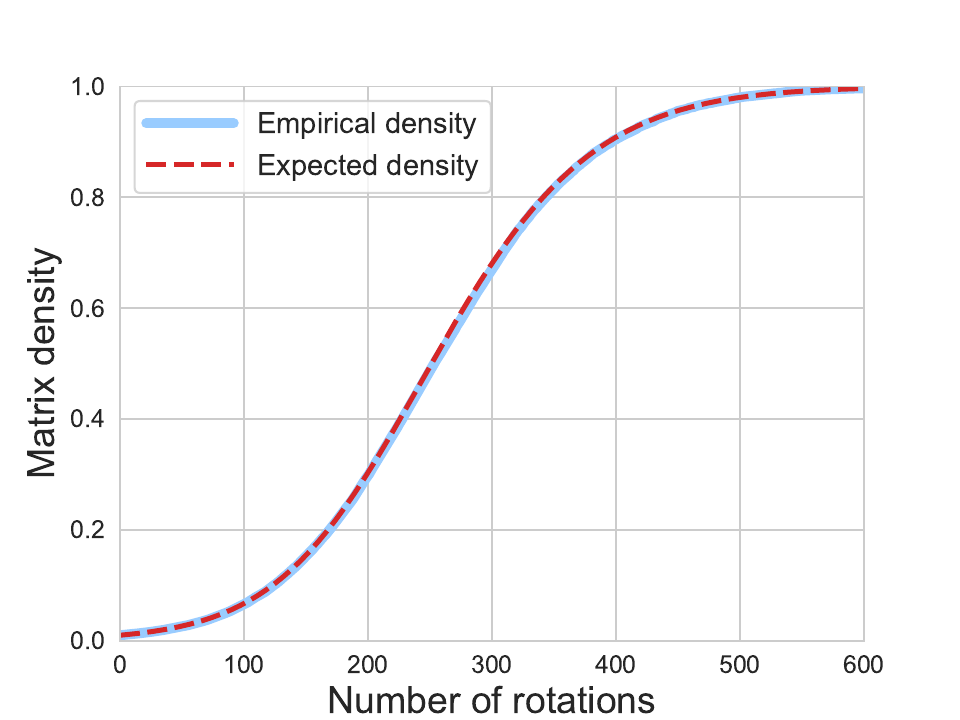}
%     \caption{Expected density (red) of a sparse matrix produced by Algorithm~\ref{alg:EOI} as as a function of the number of applied Givens rotations. Blue curve represents the empirical evaluation. }
%     \label{fig:expected_density}
% \end{minipage}
% \end{adjustbox}
% \hspace{20pt}
% \begin{adjustbox}{valign=t}
% \begin{minipage}{0.45\textwidth}

% \end{minipage}
% \end{adjustbox}
% \end{figure*}

\subsection{EOI for Static Sparse Training} \label{section:ei}

Consider a neural network with $L$ layers, with associated per-layer densities $d^1, \ldots, d^L$ given by some density distribution algorithm. The EOI scheme processes each layer independently. If layer $l$ is fully connected, we proceed by sampling an orthogonal matrix $\mathbf{W}^l$ with the target density $d^l$ using Algorithm~\ref{alg:ortho}. In the case of a non-square weight matrix with size $n \times m$, where $n<m$, we sample a row-orthogonal matrix by appending additional $m - n$ columns of zeros to the initial $A$ and using Givens rotations of size $m$. If $n>m$ we sample a $m \times n$ row-orthogonal matrix and transpose the output. We treat all zeros of the matrix as pruned connections so that $\mathbf{M}^l_{ij} = 0$ if and only if $\mathbf{W}^l_{ij} = 0$.

For a convolutional layer of shape $c_{out}\times{c_{in}}\times(2k+1)\times(2k+1)$, we sample a sparse orthogonal matrix $\mathbf{H}^{l}$ of density $d^l$ with shape $c_{out}\times c_{in}$ using the same procedure as described above. Next, we adapt the delta orthogonal initialization and set $\mathbf{W}^l_{i,j,k,k}=\mathbf{H}^{l}_{i,j}$ and $\mathbf{W}^l_{i,j,p,q}=0$ if any of $p, q$ is different than $k$.
To create the sparse mask $\mathbf{M}^l$, we follow a two-step process. First, we set $\mathbf{M}^l_{i,j,p,q}=[\mathbf{W}^l_{i,j,p,q} \neq 0]$. However, the resulting mask from this step has a density of $d^l / (2k+1)^2$ which is too low, and the only learnable parameters are located in the centers of the filters within $\mathbf{W}^l$. To rectify this, in the next step, we select uniformly, at random a sufficient number of zero entries in $\mathbf{M}^l$ and change them to 1. This ensures that the desired density is achieved while also spreading the learnable parameters of $\mathbf{W}^l$ throughout the tensor.
It's worth noting that this approach also allows us to sample matrices $\mathbf{H}^{l}$ with densities higher than the target $d^l$, such as $d_{\mathbf{H}^{l}}=\sqrt{d^l}$, by compensating for the change with a higher level of sparsity in the non-central entries of the mask. See \Cref{app:details} for a detailed description and visualization of the method. Finally, in both the fully connected and convolutional cases, we scale the obtained weights by $\sigma_w$, following the practice from dense orthogonal initializations. The bias terms are sampled from a normal distribution with zero-mean and standard deviation $\sigma_b$.

\section{Experiments}
In this section we empirically assess the performance of our method. We start by analyzing the properties of the input-output Jacobian to verify the improvement in signal propagation. Next, we showcase the potential of EOI by training highly sparse vanilla 1000-layer networks. Finally, we demonstrate the practical benefits of our scheme in sparsifying contemporary architectures and discuss it's efficiency. See Appendix~\ref{app:training_regime} for implementation details.

\subsection{Study of Singular Values} \label{section:study_of_singular}

To evaluate our method in the context of dynamical isometry theory, we examine the singular values of the input-output Jacobian of an MLP network with 7 layers and a hidden size of $100$.  We use this architecture as it was also studied for the same purposes in the work of~\citet{lee2019signal}. We start by initializing the dense model to have orthogonal weights. Next, we run the Uniform, ERK, SNIP, GraSP and Synflow density distribution methods described in Section~\ref{section:sparsetraining} to produce the sparse masks. We compare the sparse initializations obtained by directly applying these masks (Base) with the initializations returned by the approximated isometry scheme (AI) and our EOI scheme. We report the mean and maximum singular values for a set of various sparsity levels, ranging from 0\% (dense network) up to 97\%. We explore different activation functions, including \emph{linear} (no activation), \emph{tanh}, \emph{hard tanh}, and ReLU. Additionally, we also investigate the training loss for a sparsity level of 95\% in order to validate the impact of the orthogonality on the dynamics of learning. We present the results in Figure~\ref{fig:singular_values}.

\begin{table}
    \centering
    \caption{Test accuracy (standard deviation in brackets) of 1000-layer MLP and CNN networks on MNIST and CIFAR-10 respectively, density 12.5\%. We underline the best overall result and bold-out the best  result within each method.}
\scalebox{0.68}{
\setlength{\tabcolsep}{15pt}
\begin{tabular}{llcc}
\toprule
        &            &    {\bf MLP (MNIST)} & {\bf CNN (CIFAR10)} \\
{\bf Method} & {\bf Init} &             &             \\
\midrule
ERK     & Base       & 11.35(0.00)          & 10.00(0.00)          \\
     & AI         & 95.73(0.10)          & 39.08(16.35)         \\
     & EOI        & \textbf{95.98(0.25)} & \textbf{78.41(0.15)} \\
\midrule
GraSP   & Base       & 11.35(0.00)          & 16.05(0.40)          \\
   & AI         & 95.33(0.11)          & 14.48(1.12)          \\
   & EOI        & \textbf{95.68(0.30)} & \textbf{77.68(0.34)} \\
\midrule
SNIP    & Base       & 11.35(0.00)          & 10.00(0.00)          \\
    & AI         & \textbf{95.35(0.40)} & 10.00(0.00)          \\
    & EOI        & 94.06(1.11)          & \textbf{46.59(0.11)} \\
\midrule
Uniform & Base       & 11.35(0.00)          & 10.00(0.00)          \\
 & AI         & 95.52(0.30)          & 33.35(21.34)         \\
 & EOI        & \textbf{96.11(0.20)} & \textbf{76.87(0.34)} \\
\midrule
\midrule
SAO     & Base       & 96.74(0.13)          & 78.80(0.11)          \\
     & EOI        & \underline{\textbf{97.43(0.05)}} & \underline{\textbf{79.60(0.24)}} \\
\midrule
\midrule
Dense   & Orthogonal & 97.51(0.13)          & 80.71(0.25)          \\
\bottomrule
\end{tabular}}
\vskip -0.2in
\label{tab:1000l_nets}
\end{table}

\begin{table*}[t]
    \centering
    \caption{
    Test accuracy (standard deviation in brackets) of various convolutional architectures on dedicated datasets for density 10\%. We underline the best overall result and bold-out the best result within each density distribution.
    }
    \label{tab:conv_density10}
% \scalebox{0.68}{ MAKE SMALLER
\scalebox{0.67}{
\setlength{\tabcolsep}{20pt}
% \begin{tabular}{lllllll>{\color{red}}l}
\begin{tabular}{llllllll}
\toprule
        &               &     \textbf{ResNet32} &     {\bf ResNet56} &    {\bf ResNet110} &        {\bf VGG-16} & {\bf EfficientNet} & {\bf ResNet110} \\
{\bf Method} & {\bf Init} &   (CIFAR10)           &    (CIFAR10)          &  (CIFAR10)            &       (CIFAR10)        &   (Tiny ImageNet)      & (CIFAR100)           \\
\midrule
ERK     & Base       & 89.49(0.27)                & 90.82(0.23)                & 91.51(0.32)                & 91.44(0.19)                & \textbf{45.54(0.76)}       & 66.85(0.20)                \\
     & AI         & 89.51(0.21)                & 90.74(0.17)                & \underline{\textbf{91.93(0.20)}} & 91.51(0.10)                & 45.13(0.70)                & \underline{\textbf{66.97(0.10)}} \\
     & EOI        & \underline{\textbf{89.79(0.38)}} & \underline{\textbf{91.07(0.15)}} & 91.66(0.35)                & \textbf{92.02(0.12)}       & 44.13(0.77)                & \underline{\textbf{66.97(0.34)}} \\
\midrule
GraSP   & Base       & 88.95(0.21)                & 90.18(0.35)                & 91.60(0.50)                & 33.22(23.11)               & 46.80(0.35)                & 65.62(0.48)                \\
   & AI         & 89.26(0.33)                & 90.66(0.35)                & 91.51(0.53)                & 34.87(21.97)               & \textbf{46.98(0.43)}       & 66.31(0.46)                \\
   & EOI        & \textbf{89.66(0.09)}       & \textbf{90.89(0.20)}       & \textbf{91.82(0.17)}       & \underline{\textbf{92.65(0.28)}} & 45.26(0.29)                & \textbf{66.60(1.00)}       \\
\midrule
SNIP    & Base       & 89.49(0.27)                & 90.57(0.31)                & 91.44(0.31)                & \textbf{92.62(0.17)}       & 22.66(1.26)                & 62.14(1.63)                \\
    & AI         & 89.48(0.16)                & 90.46(0.20)                & \textbf{91.49(0.39)}       & 92.55(0.20)                & 22.93(1.20)                & 61.61(1.80)                \\
    & EOI        & \textbf{89.54(0.39)}       & \textbf{90.79(0.20)}       & 91.33(0.33)                & 92.58(0.17)                & \textbf{50.53(0.65)}       & \textbf{66.35(0.68)}       \\
\midrule
Synflow & Base       & 88.78(0.27)                & 90.05(0.23)                & 91.33(0.24)                & 92.15(0.19)                & 53.19(0.63)                & 63.16(0.39)                \\
 & AI         & 88.82(0.24)                & 90.31(0.29)                & 91.55(0.22)                & 92.33(0.16)                & \underline{\textbf{52.70(0.31)}} & 63.14(0.65)                \\
 & EOI        & \textbf{89.49(0.38)}       & \textbf{90.96(0.11)}       & \textbf{91.86(0.11)}       & \textbf{92.36(0.24)}       & 52.69(0.56)                & \textbf{64.07(1.70)}       \\
\midrule
Uniform & Base       & 88.27(0.27)                & 89.64(0.23)                & 91.11(0.38)                & 90.50(0.14)                & \textbf{30.93(0.19)}       & \textbf{65.42(0.25)}       \\
 & AI         & 88.25(0.33)                & 89.86(0.08)                & 91.16(0.28)                & 90.55(0.14)                & 22.72(13.36)               & 65.37(0.38)                \\
 & EOI        & \textbf{89.08(0.40)}       & \textbf{90.42(0.26)}       & \textbf{91.42(0.43)}       & \textbf{90.91(0.26)}       & 24.51(13.25)               & 63.27(0.84)                \\
\midrule
\midrule
Dense   & Base       & 92.68(0.19)                & 93.01(0.23)                & 92.52(0.47)                & 92.84(0.08)                & 52.98(0.43)                & 70.25(1.66)                \\
   & Orthogonal & 92.73(0.33)                & 92.59(0.87)                & 92.79(0.85)                & 92.84(0.23)                & 53.05(0.73)                & 70.93(1.63)                \\
\bottomrule
\end{tabular}}
\vskip -0.1in
\end{table*}

For any Base initialization, the mean and maximum singular values decrease with the increase of the sparsity, irrespective of the used activation function, as was also witnessed in~\citet{lee2019signal}. Interestingly, we find that some density distribution methods are more resilient to this behavior. In particular, the Synflow method is able to maintain good statistics of the input-output Jacobian up to a sparsity of 0.5\%. Moreover, we note that imposing orthogonality, whether through AI or our \methodname{}, aids in preserving nearly identical mean and maximum singular values compared to the dense model. However, our approach is much better at sustaining this behavior even within the exceedingly high sparsity range (beyond 0.95). We argue that this distinction arises from the exactness of our orthogonality, whereas AI is susceptible to approximation errors.

Apart from measuring the singular values we verify the learning capabilities of the studied networks (see the bottom row of Figure~\ref{fig:singular_values}). We observe that in practically every case, sparse orthogonal initialization helps to achieve better training dynamics, especially in the initial steps of the optimization. However, we find that this effect is less eminent for the Synflow and SNIP  methods. Interestingly, we noted that, in the majority of the situations, the training curve improves even in the case of the ReLU activation. In consequence, even if dynamical isometry is not possible to achieve, sparse orthogonal initializations can still enhance the training procedure of the network.

\subsection{1000-Layer Networks} 
\label{section:1000_layer}

To showcase the effectiveness of our initialization scheme, we carry out experiments on extremely deep networks with the \emph{tanh} activation function. Note that this activation function is necessary to obtained dynamical isometry. We use two models: a 1000-layer MLP with a width of 128 and a 1000-layer CNN with a hidden channel size of 128. We do not use any residual connections or normalization layers. We initialize the baseline dense models to achieve dynamical isometry. More precisely, we use the orthogonal initialization (delta-orthogonal with circular padding in case of the CNN) and set the ($\sigma_w$, $\sigma_b$) parameters of that initialization to reach criticality, i.e. $\mathcal{X}(\sigma_w, \sigma_b)=1$. To this end, we follow the procedure discussed in~\citet{xiao2018dynamical}.\footnote{See Appendix~\ref{app:training_regime} exact values of the initialization parameters}. We intentionally use these extremely deep vanilla networks in order to study whether \methodname{} benefits from dynamical isometry. Large depth and the lack of normalization layers make a network more prone to the problem of vanishing or exploding gradients, requiring a  proper initialization in order to allow the network to learn.

To obtain the sparse baselines we employ the ERK, GraSP, SNIP and Uniform density distribution methods for sparsity of 87.5.\%\footnote{When computing the scores for the density distribution methods for the CNN, we use the orthogonal convolution approach from Algorithm 1 of~\citet{xiao2018dynamical} -- see Appendix~\ref{app:training_regime}.}. We leave-out Synflow as it cannot be used with \emph{tanh} activation. Apart from directly adapting the sparse initializations provided by these methods ("Base" in column \textbf{Init}), we also extract their density-per-layer ratios to produce the sparse orthogonal initializations using the AI and \methodname{} schemes. The chosen sparsity level of 87.5\% and network sizes enable us to incorporate the SAO method into this study. However, this method cannot be used with varying per-layer sparsities and comes with its own density distribution, which is equal to the uniform one, with the first and last layers unpruned. In addition, we also evaluate \methodname{} on the SAO density distribution. The results are in \Cref{tab:1000l_nets}.

We observe that directly using the sparse mask obtained from the density distribution algorithms ("Base" in the Table) leads to random performance. This is expected since, as observed in Section~\ref{section:study_of_singular}, the performed pruning causes the degradation of the signal propagation. 
Applying orthogonal-based sparse initialization schemes helps to recover good performance. Most notably, our EOI consistently outperforms other sparse approaches and holds the overall best sparse performance in both models.  
Importantly, exact orthogonalization holds a clear advantage over the approximated one. We argue that, in case of MLPs, this is due to the accumulated approximation errors. For the CNN, the AI approach fails due to its ill-specification of the orthogonality of a convolutional layer. 
The SAO method, on the other hand, can achieve high results, however, it does not hold any clear advantage over EOI. This study showcases that, with the use of EOI, the efficacy of orthogonal initializations and dynamical isometry can be extended to sparse training.

\begin{figure*}[t!]
    \centering
    \begin{subfigure}[b]{0.65\textwidth}
       \centering\includegraphics[trim=100 0 100 30,clip,width=\linewidth]{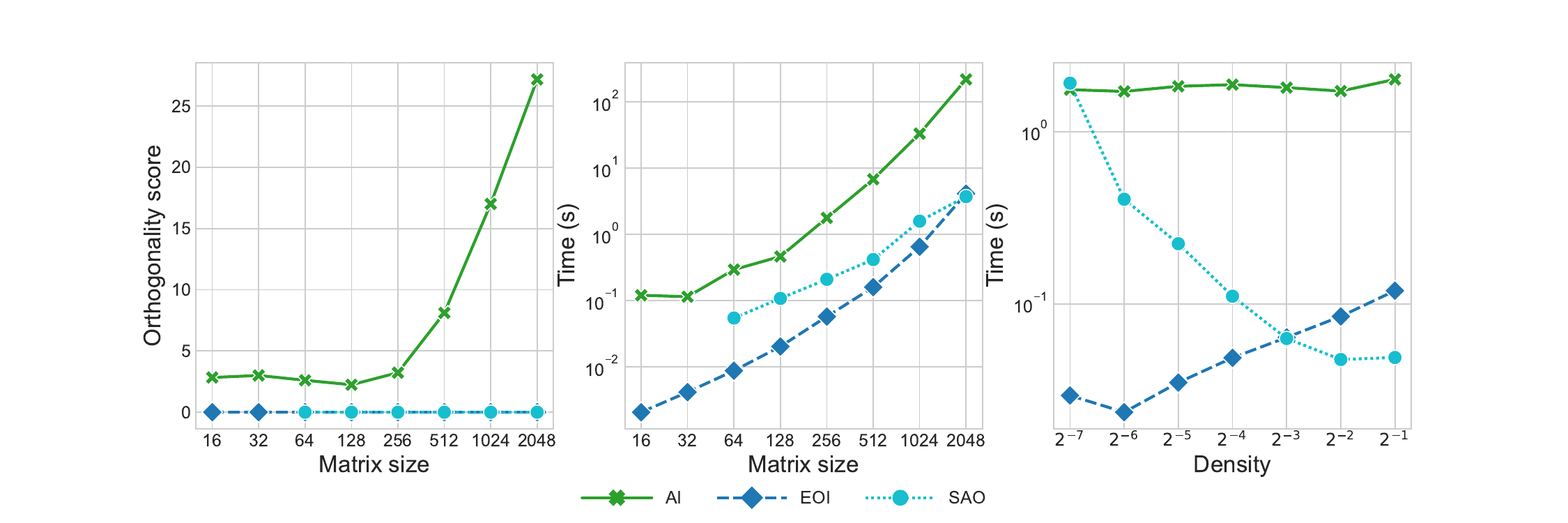}
    \newsubcap{\textbf{Left:} The Orthogonality Score (OS) vs. the matrix size for density $d=0.0625$. \textbf{Middle:} The wall time compute vs. the matrix size for density $d=0.0625$. \textbf{Right:} The wall time vs. density for matrix with size $n=256$. Please note the logarithic scales.} 
    \label{fig:ortho_time_density_plot}
    \end{subfigure}%
    \hfill
    \begin{subfigure}[b]{0.3\textwidth}
       \centering
    \includegraphics[trim=0 0 0 10,clip,width=\linewidth]{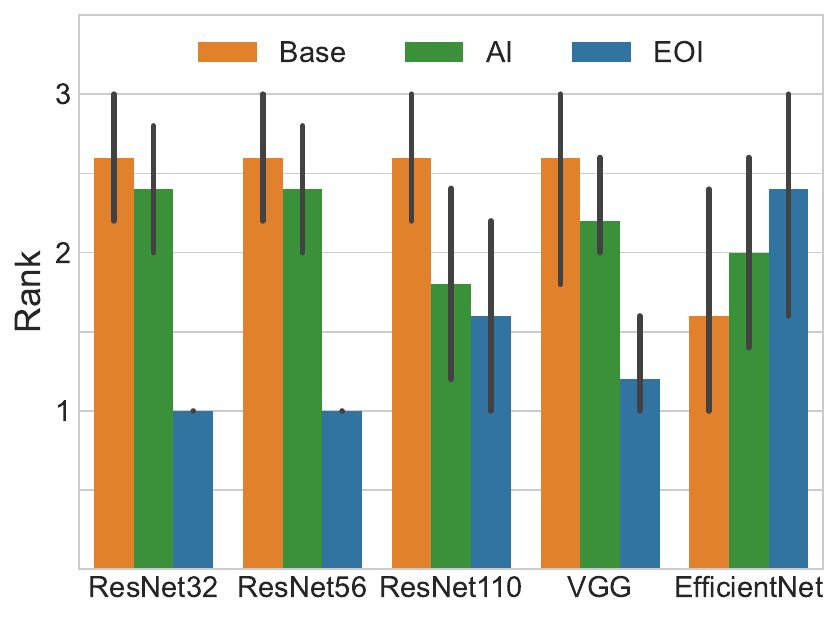}
    \newsubcap{Ranking of initialization schemes for different models. The \emph{lower} the value, the better. \methodname{} performs the best on the 4 out 5 cases. }
    \label{fig:conv_rank_plot}
    
    \end{subfigure}
    \vskip -0.1in
    % \caption{Caption place holder}
\end{figure*}

\subsection{Practical Benefits of Orthogonality} \label{section:practical_benefits}
In the previous section, we showed that our \methodname{} initialization can excel in setups related to studies of dynamical isometry. In this section, we demonstrate that orthogonality can also improve the sparse performance in more commonly used architectures that employ residual connections and normalization techniques. To this end, we study a selection of models frequently employed in sparse studies, such as the VGG-16~\citep{simonyan2014very}, ResNet32, ResNet56, and ResNet110~\citep{he2016deep} on CIFAR-10 and CIFAR-100~\citep{krizhevsky2009learning}, as well as EffcientNet~\citep{tan2019efficientnet} on the Tiny ImageNet dataset~\citep{le2015tiny}. 
For the baseline dense models, we study both their base initialization, as well as the orthogonal one. 
In the sparse case, similarly to the experiments from Section \ref{section:1000_layer}, we pair density distribution methods with weight / mask initializations to form sparse initialization algorithms.
Note that we are not able to use SAO in this study, due to the constraints it introduces for the possible density levels and network architectures. 
In total we evaluate 15 variants of sparse initialization, 5 of which are EOI-based.
We present the test accuracies in \Cref{tab:conv_density10}. 

Interestingly, we observe that the orthogonal approaches consistently outperform the Base one. The only exception is the Efficient Net model. We argue that this may be due to the bottleneck blocks used in this architecture, which prevent proper orthogonal initializations. At the same time, we notice that the discrepancies between the AI and \methodname{} are much smaller in this study than in Section~\ref{section:1000_layer}. This can be attributed to the fact that skip connections and normalization layers already facilitate learning stabilization. In consequence, the ill-specification of the convolution orthogonality is less detrimental in this case. Remarkably, within each density-distribution method, we observe that our \methodname{} scheme typically yields the highest test accuracy. This is evident from the mean rank assigned after evaluating the results of each scheme across all density distribution methods and models (refer to Figure~\ref{fig:conv_rank_plot}). 

The above results advocate for the use of sparse orthogonal schemes, even if the network already has good signal propagation properties. This may suggest that influencing orthogonality in very sparse regimes plays yet another role, apart from simply stabilizing the learning.

\paragraph{Large Scale Experiments}
\label{app:imagenet} 
In addition to the previous studies, we also investigate how our \methodname{} scheme can be used in larger models. We consider two additional architectures trained on the ImageNet dataset: the DeiT III vision transformer~\citep{touvron2022deit} and ResNet50~\citep{he2016deep}. Since DeiT is a transformer network with an attention mechanism for which the orthogonality denition is not straightforward, we only apply the sparsity to the MLP block in each encoder-building block. See~\Cref{app:training_regime} for details of the training setup for both those models. In both setups we investigate the benefit that the \methodname{} can bring over the most common data-agnostic method: the ERK static sparse initialization, which has shown consistently strong results among all architectures in the study form the previous section. We consider the density of $0.1$ and present the results in Figure~\ref{fig:imagenet}. 

In both setups we observe that \methodname{} helps in improving the results. In the case of the ResNet-50 model the advantage of \methodname{} is not significant. We argue that this may be an effect of using the bottleneck architecture with \methodname{} or an ill-specified density of the convolution centre $\mathbf{H}^l$.  However, for DeiT-III, the gain from EOI is clearly visible, resulting in a circa $1\%$ increase over the ERK-base performance. This demonstrates the potential of using sparse orthogonal initialization over randomized sparsification in challenging datasets and large transformer models - a direction we aim to study closer in the future. 

\begin{figure}[t]
    \centering
    % \begin{subfigure}[t]{0.45\textwidth} MAKE SMALLER
    % \begin{subfigure}[t]{0.45\textwidth}
    %     \centering
    \includegraphics[width=0.45\linewidth]{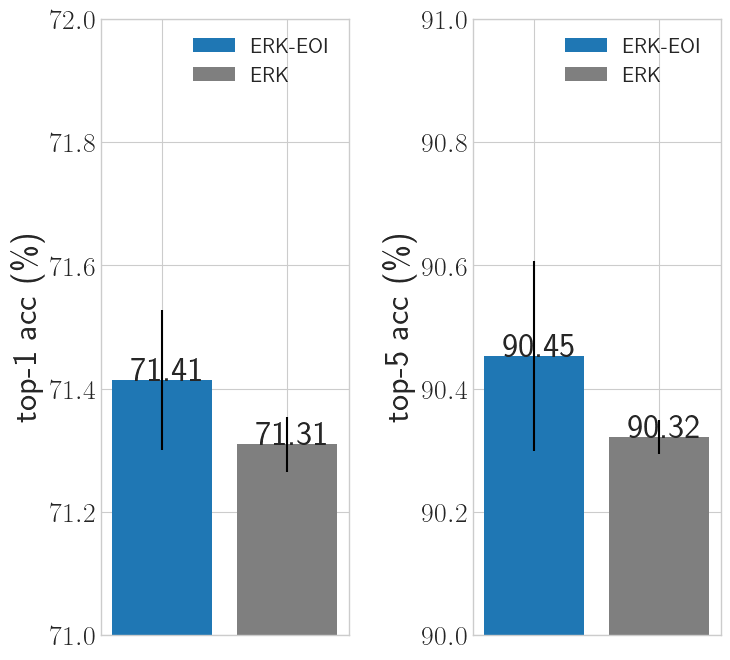}
    % \begin{subfigure}[t]{0.45\textwidth} MAKE SMALLER
    \hfill
    \includegraphics[width=0.45\linewidth]{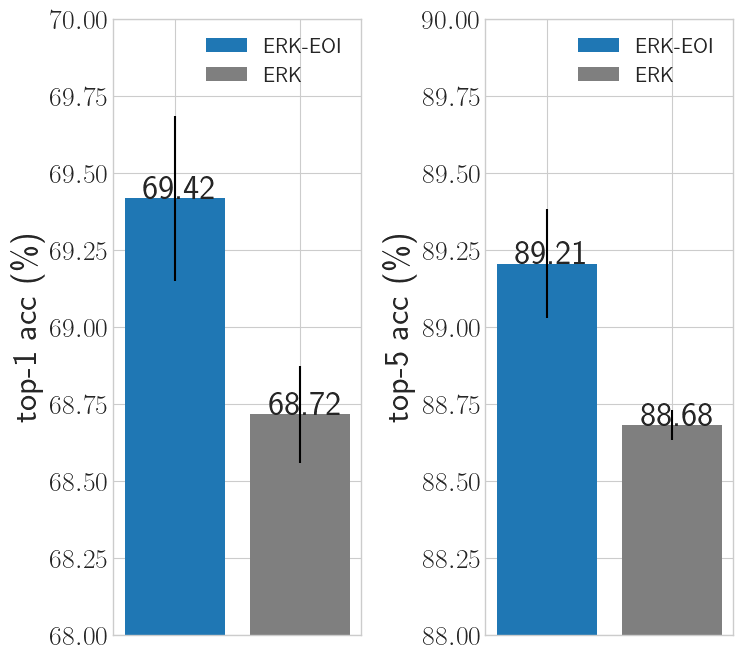}
    \caption{
    Top-1 and Top-5 validation accuracy on ImageNet obtained for the ERK and ERK-\methodname{} initializations with density $0.1$ on the ResNet50 (\textbf{Left}) and DeiT III (\textbf{Right}) models.
    }
    \label{fig:imagenet}
    % \vskip -0.2in
    \vskip -0.2in
\end{figure}

\subsection{Efficiency of \methodname{}} \label{sec:efficiency}

% In addition to comparing performance, we also assess \methodname{} in terms of the compute time and the quality of the resulting orthogonal initializations when compared to AI and SAO.

To evaluate the practical time complexity, we consider the task of generating a sparse orthogonal matrix $A$ with dimension $n \times n$. We evaluate the \methodname{}, SAO and AI schemes for matrices with size varying from $n=16$ to $n=2048$. We record the actual wall-time required to obtain the sparse isometry for each method and matrix size. Additionally, for each such matrix, we report the mean Orthogonality Score (OS), which is defined as the norm of the difference between $A^TA$ and the identity matrix~\citep{lee2019signal}. Lastly, we explore how the computational time changes relative to the target density $d$ for a matrix with a fixed size of $n=256$. The results of these analyses are presented in Figure~\ref{fig:ortho_time_density_plot}.

By examining the Orthogonality Score, we observe that both SAO and \methodname{} are capable of producing genuinely orthogonal matrices, while the AI method provides a cruder approximation, particularly as the network size increases. In terms of running time, we clearly observe the benefit of the $O(n)$ computation of Givens matrix multiplication in contrast to the need to compute the orthogonality loss in AI. Our method is circa $100$x faster! Moreover, within the limit of the studied matrix sizes, it is also faster than the SAO scheme. However, we observe that SAO benefits from an overall better scaling trend, indicating that it could be a better choice for matrices with $n>2048$. Finally, by analyzing the plot presenting the time complexity as a function of density, we observe that \methodname{} is especially efficient in producing extremely sparse networks -- an area in which SAO performs very poorly.  Once again, the scaling properties of the AI approach provide the worst running time. This investigation confirms that our \methodname{} scheme is an efficient way of providing exact orthogonal initializations, particularly for very high sparsity regimes and moderate matrix sizes. 

% \begin{figure}[ht]
%     \centering\includegraphics[trim=100 0 100 30,clip,width=\linewidth]{ortho_time.pdf}
%     \caption{\textbf{Left:} The Orthogonality Score (OS) versus the matrix size for density $d=0.0625$. \textbf{Middle:} The wall time compute versus the matrix size for density $d=0.0625$. \textbf{Right:} The wall time compute versus density for matrix with size $n=256$. Please note the logarithic scales.} 
%     \label{fig:ortho_time_density_plot}
% \end{figure}

\section{Conclusions}
In this work, we introduced the Exact Orthogonal Isometry (\methodname{})—a novel sparse orthogonal initialization scheme compatible with both fully-connected and convolutional layers. Our approach is the first orthogonality-based sparse initialization that, at the same time, provides exact orthogonality, works perfectly for convolutional layers,  supports any density-per-layer, and does not unnecessarily constrain the layer’s dimensions. As a result, it surpasses approximated isometry in high-sparsity scenarios while remaining easily adaptable to contemporary networks and compatible with any density distribution. Experimental analysis demonstrates that our method consistently outperforms other popular static sparse training approaches and enables the training of very deep vanilla networks by benefiting from dynamical isometry. We believe that the remarkable performance of \methodname{} underscores the potential of employing orthogonal initializations in sparse training. Furthermore, it highlights the critical need to take into account not only the binary mask values but also the weight initialization characteristics that arise (or disappear) as a result of pruning.  

\textbf{Limitations and Future Work}. Our study concentrated on image recognition tasks for comparability with other works. Exploring the potential application of our sparse orthogonal schemes in large NLP models is intriguing. Future efforts may involve manipulating the distribution of random Givens matrices or adjusting the central orthogonal matrix's density in convolutional layers for enhanced results. While our focus was on static sparse training, investigating how EOI can contribute in dynamic sparse training would be interesting. %In a broader context, there's significant potential for research in sparse training, especially in developing methods that sample weights and masks simultaneously.

\section*{Impact Statement}
This paper presents work whose goal is to advance the field of Machine Learning. There are many potential societal consequences of our work, none which we feel must be specifically highlighted here.

\section*{Acknowledgements}
Filip Szatkowski was supported by National Science Centre (NCN, Poland) Grant No. 2022/45/B/ST6/02817. The work of Aleksandra Nowak and Łukasz Gniecki was supported by National Science Centre (NCN, Poland), Grant No. 2023/49/B/ST6/01137. We gratefully acknowledge Polish high-performance computing infrastructure PLGrid (HPC Center: ACK Cyfronet AGH) for providing computer facilities and support within computational grant no. PLG/2024/017010.

\bibliography{main_references}
\bibliographystyle{icml2024}

%%%%%%%%%%%%%%%%%%%%%%%%%%%%%%%%%%%%%%%%%%%%%%%%%%%%%%%%%%%%%%%%%%%%%%%%%%%%%%%
%%%%%%%%%%%%%%%%%%%%%%%%%%%%%%%%%%%%%%%%%%%%%%%%%%%%%%%%%%%%%%%%%%%%%%%%%%%%%%%
% APPENDIX
%%%%%%%%%%%%%%%%%%%%%%%%%%%%%%%%%%%%%%%%%%%%%%%%%%%%%%%%%%%%%%%%%%%%%%%%%%%%%%%
%%%%%%%%%%%%%%%%%%%%%%%%%%%%%%%%%%%%%%%%%%%%%%%%%%%%%%%%%%%%%%%%%%%%%%%%%%%%%%%
\newpage
\appendix
\onecolumn

\section{Training Regime} \label{app:training_regime}

In all the conducted experiments, we set the batch size to 128 and use the SGD optimizer, with a momentum parameter set to 0.9. We conducted each experiment independently five times to ensure the robustness and reliability of the results. We do not optimize any specific hyperparameters. In the following Subsections, we discuss the design choices specific to each experiment setup from the main text. 

\subsection{Study of Singular Values}
For the experiments in Section~\ref{section:study_of_singular} we used a simple MLP network with $7$ layers and width $100$ trained on the MNIST dataset~\citep{lecun1998gradient}. The same architecture was used in ~\citet{lee2019signal} for an analogous evaluation. We train the network for $100$ epochs, employing a learning rate of $10^{-2}$ and a weight decay of $10^{-4}$. We set the parameters $\sigma_w$ and $\sigma_b$ from dynamical isometry to $1$ and $0$, respectively. For AI (approximate isometry), we conduct $10^4$ iterations of the orthogonality optimization process, which is the default value also used by~\citet{lee2019signal}. We test sparse models with density from the set $\{0.03, 0.05, 0.1, 0.2, 0.3, 0.4, 0.5, 0.6, 0.7, 0.8, 0.9, 1.0\}.$ and report the mean and maximal singular value of the input-output Jacobian. 

\subsection{1000-Layer Networks} \label{app:1000-layer_networks}
In experiments discussed in Section \ref{section:1000_layer} we investigated extremely deep MLPs and CNNs. In all experiments within this category, we configure the $\sigma_w$ and $\sigma_b$ parameters from dynamical isometry to be $1.0247$ and $0.00448$, respectively. In order to obtain those values we use the scheme discussed in~\citep{schoenholz2016deep, xiao2018dynamical}. 

The MLP architecture consists of 1000 layers, with 998 hidden layers, each having a shape of $128 \times 128$, taking inspiration from the model used in ~\citet{pennington2017resurrecting}. We train the MLP on the MNIST dataset for $100$ epochs, using a learning rate of $10^{-4}$ and weight decay of $10^{-4}$.

In the case of the 1000-layer CNN architecture, the first layer consists of an input transformation layer that increases the channel size to $128$. The next two layers are convolutions with stride 2, that perform dimension reduction. The last layers consist of average pooling, flattening layer and a fully-connected layer. The remaining middle layers are always convolutions with a kernel size of 3 and input and output channel width of $128$. In all convolution layers in this model, we use circular padding. This construction was inspired by the similar architecture studied in~\citep{xiao2018dynamical}. 

We train the CNN model for 100 epochs using a learning rate of $10^{-4}$ and weight decay of $10^{-4}$. Additionaly, In experiments involving \methodname{}, we sampled the central orthogonal matrices $\mathbf{H}^{l}$ with a density of $d_{\mathbf{H}^{l}} = d^l$. For experiments with AI, we set the number of orthogonality optimization iterations to $10^3$. Moreover, when computing the scores for the density distribution algorithms for the CNN, we use the orthogonal convolution approach from Algorithm 1 of~\citet{xiao2018dynamical}, instead of the delta-orthogonal one. Note that both solutions adhere to the convolution definition from Section~\ref{sec:ortho_convs}. However, the delta initialization results in more weights with zero-magnitude and hence would unnecessarily bias the score-based density distribution algorithms, such as SNIP or GraSP. For the dense baseline, we use the delta orthogonal scheme.

\subsection{ResNets, VGG, EfficientNet} \label{section:regime_contemporary}

In this Section, we describe the setting used in experiments presented in Section \ref{section:practical_benefits}. In this study, we use the VGG-16~\citep{simonyan2014very}, ResNet-32/56/110~\citep{he2016deep}, as Efficient-Net-B0~\citep{tan2019efficientnet} architectures. Note that for the VGG model, we adapt the VGG16-C variant introduced in~\citet{dettmers2019sparse}. We employ the following common setups:

\begin{table}[h]

    \centering
    \caption{Number of parameters of the sparsified architectures used in the experiments.}
\scalebox{0.85}{
\begin{tabular}{lccr}
\toprule
\textbf{Architecture} & \textbf{Density} & & \textbf{\#Params} \\
\midrule
ResNet-32 & 10\% & & $\sim$ 45K \\
ResNet-56 & 10\% & & $\sim$ 85K \\
ResNet-110 & 10\% & & $\sim$ 170K \\
VGG16-C & 10\% & & $\sim$ 1.1M \\
EfficientNet & 10\% & & $\sim$ 420K \\
\midrule
1000-layer-MLP & 12.5\% & & $\sim$ 2M \\
1000-layer-CNN & 12.5\% & & $\sim$ 18M \\
\bottomrule
\end{tabular}}
\label{tab:num_params}
\end{table}

We train the VGG and ResNet models on the CIFAR-10~\citep{krizhevsky2009learning} and CIFAR-100 (for ResNet110) dataset for $200$ epochs with a learning rate of $0.1$ and weight decay of $10^{-4}$. We set the $\sigma_w$ and $\sigma_b$ parameters to 1 and 0, respectively. In experiments involving \methodname{}, we chose to sample denser orthogonal matrices $\mathbf{H}^{l}$ to be used as centers of convolutional tensors,  Specifically, we set the density $d_{\mathbf{H}^{l}}$ to $\sqrt{d^l}$, as we observed that it leads to improved performance compared to setting $d_{\mathbf{H}^{l}} = d^l$. Similarly to the setup discussed in Section \ref{app:1000-layer_networks}, we also use Algorithm 1 of~\citet{xiao2018dynamical} to initialize the convolution when computing the density distribution algorithms. Here, we also use it in the dense baseline. For experiments with AI, we configure the number of orthogonality optimization iterations to be $10^4$. Additionally, in experiments involving Synflow on ResNet-56 and ResNet-110, we use parameters of double precision.

For experiments involving the EfficientNet-B0~\citep{tan2019efficientnet} architecture, we used the Tiny-ImageNet~\citep{le2015tiny} dataset. The overall configuration follows the one on ResNets and VGG, with the exception that we train the network for $120$ epochs using a learning rate of $10^{-2}$ and a weight decay of $10^{-3}$.

Finally, let us note that throughout our work, we consistently employ the default training and test splits provided by the datasets. However, since the default splits of MNISTm CIFAR-10, CIFAR-100 do not include a validation set, we use 10\% of the training split as the validation dataset and the remaining 90\% for the actual training dataset.

\subsection{Experiments on ImageNet}

For the ResNet50 architecture, we use the standard hyperparameter setup with batch size $128$, SGD optimizer with momentum $0.9$, weight decay $0.0001$, and Nesterov$= \texttt{True}$. We start with a learning rate equal to $0.1$ and decay it after the training enters the 50\% and 75\% of the epochs. We use the learning rate decay of $0.1$ and train the model for $90$ epochs. For the DeiT model we use the default hyperparameter setup of~\cite{touvron2022deit} for the "deit\_small\_patch16\_LS" model.\footnote{Available at \url{https://github.com/facebookresearch/deit}, access:14.11.2023.}

\begin{table}[]
    \caption{Test accuracy (with standard deviation) of ResNet32 and VGG-16 on CIFAR-10, using various per-layer density distribution algorithms, density 10\%. We underline the best overall result and bold-out the best result within each density distribution method.}
    \centering
    \label{table:only_res32_vgg}
    \scalebox{0.7}{
    \begin{tabular}{llcccc|cccc}
\toprule
{} & {} & \multicolumn{4}{c}{\bf ResNet32} & \multicolumn{4}{c}{ \bf VGG-16} \\
\midrule
{} & {} &         \bf ReLU &      \bf  LReLU &        \bf SELU &        \bf Tanh &      \bf ReLU &   \bf LReLU &    \bf SELU &     \bf Tanh\\
\bf Method & \bf Init                 &              &              &              &              &               &              &              &              \\
\midrule
ERK & Base & 89.49(0.27) & 89.41(0.30) & 89.24(0.16) & 87.96(0.20) & 91.44(0.19) & 91.41(0.25) & 90.70(0.30) & \textbf{91.21(0.12)} \\
 & AI & 89.51(0.21) & 89.36(0.19) & 89.22(0.25) & \underline{\textbf{88.30(0.18)}} & 91.51(0.10) & 91.61(0.14) & 90.80(0.26) & 91.09(0.22) \\
 & EOI & \underline{\textbf{89.79(0.38)}} & \underline{\textbf{89.81(0.08)}} & \textbf{89.37(0.14)} & 88.06(0.12) & \textbf{92.02(0.12)} & \textbf{92.21(0.25)} & \textbf{91.16(0.12)} & 91.13(0.27) \\
\midrule
GraSP & Base & 88.95(0.21) & 88.92(0.38) & 89.02(0.40) & \textbf{88.18(0.32)} & 33.22(23.11) & 92.64(0.20) & 91.29(0.22) & 91.65(0.12) \\
 & AI & 89.26(0.33) & 89.03(0.26) & 89.18(0.24) & 88.16(0.23) & 34.87(21.97) & \textbf{92.57(0.15)} & 91.10(0.18) & \textbf{91.79(0.16)} \\
 & EOI & \textbf{89.66(0.09)} & \textbf{89.59(0.26)} & \textbf{89.39(0.25)} & 88.17(0.26) & \underline{\textbf{92.65(0.28)}} & 92.48(0.29) & \textbf{91.42(0.21)} & 91.71(0.14) \\
\midrule
SNIP & Base & 89.49(0.27) & 89.29(0.49) & 89.05(0.22) & 87.91(0.30) & \textbf{92.62(0.17)} & \underline{\textbf{92.80(0.20)}} & 91.51(0.12) & 92.00(0.23) \\
 & AI & 89.48(0.16) & 89.42(0.19) & 88.91(0.11) & 87.73(0.25) & 92.55(0.20) & 92.74(0.14) & 91.57(0.12) & 91.83(0.06) \\
 & EOI & \textbf{89.54(0.39)} & \textbf{89.57(0.21)} & \underline{\textbf{89.40(0.17)}} & \textbf{87.90(0.57)} & 92.58(0.17) & 92.55(0.09) & \textbf{91.74(0.10)} & \underline{\textbf{92.08(0.15)}} \\
\midrule
Synflow & Base & 88.78(0.27) & 88.90(0.26) & 88.56(0.19) & - & 92.15(0.19) & 92.30(0.16) & \textbf{\textbf{91.84(0.20)}} & - \\
 & AI & 88.82(0.24) & 88.85(0.41) & 88.48(0.22) & - & 92.33(0.16) & 92.33(0.12) & 91.51(0.13) & - \\
 & EOI & \textbf{89.49(0.38)} & \textbf{89.46(0.25)} & \textbf{89.13(0.21)} & - & \textbf{92.36(0.24)} & \textbf{92.63(0.24)} & 91.69(0.29) & - \\
\midrule
Uniform & Base & 88.27(0.27) & 88.19(0.19) & 88.27(0.11) & 87.16(0.19) & 90.50(0.14) & 90.48(0.19) & 89.96(0.14) & 89.88(0.16) \\
 & AI & 88.25(0.33) & 88.28(0.38) & 88.38(0.22) & 87.11(0.24) & 90.55(0.14) & 90.33(0.25) & 90.02(0.22) & 90.10(0.21) \\
 & EOI & \textbf{89.08(0.40)} & \textbf{88.88(0.27)} & \textbf{88.79(0.34)} & \textbf{88.01(0.29)} & \textbf{90.91(0.26)} & \textbf{91.06(0.15)} & \textbf{90.53(0.19)} & \textbf{90.45(0.39)} \\

\bottomrule
\end{tabular}
}
\end{table}

\section{Density of Givens products} \label{app:givens}
In this appendix, we examine the behavior of the process of composing random, independent Givens matrices with regard to the expected density of their product.
The Givens matrix $G_n(i, j, \varphi)$ was defined in section \ref{section:preliminaires} and can be visualized as follows:
\begin{equation} \label{equ:givens}
\footnotesize
G_n(i, j, \varphi) = 
\begin{bmatrix}
    1 & \dots & 0 & \dots & 0 & \dots & 0\\
    \vdots & & \vdots & & \vdots & & \vdots\\
    0 & \dots & \cos{\varphi} & \dots & -\sin{\varphi} & \dots & 0\\
    \vdots & & \vdots & & \vdots & & \vdots\\
    0 & \dots & \sin{\varphi} & \dots & \cos{\varphi} & \dots & 0\\
    \vdots & & \vdots & & \vdots & & \vdots\\
    0 & \dots & 0 & \dots & 0 & \dots & 1
\end{bmatrix},
\end{equation}
In our setting, we assume that $(G^{(t)})_{t\in \mathbb{N}}$ is a sequence of random, independent Givens matrices of size $n$. We define a sequence $(A^{(t)})_{t \in \mathbb{N}}$ of random matrices:
\begin{align*}
    A^{(0)} &= I_n\\
    A^{(t + 1)} &= A^{(t)} \cdot G^{(t)}
\end{align*}

\begin{theorem} \label{thm:ptk}
Define $p(t, k)$ as the probability that the top row of $A^{(t)}$ will have exactly $k$ non-zero elements. The following recurrence relation holds:
\begin{align} \label{eq:recur}
    p(t + 1, k + 1) = p(t, k + 1) \cdot 
    \frac{\binom{k + 1}{2} + \binom{n - k - 1}{2}}{\binom{n}{2}} + p(t, k) \cdot \frac{k \cdot (n - k)}{\binom{n}{2}}
\end{align}
with the base condition $p(0, 1) = 1$ and $p(0, k) = 0$ for $k \neq 1$.
\end{theorem}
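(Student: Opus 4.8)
The plan is to view the sparsity pattern of the top row of $A^{(t)}$ as a Markov chain and to show that the number of its non-zero entries grows by at most one per step, with the stated transition probabilities. First I would record the effect of a single right-multiplication. Since $A^{(t+1)} = A^{(t)} \cdot G^{(t)}$ with $G^{(t)} = G_n(i,j,\varphi)$, and $G_n(i,j,\varphi)$ differs from $I_n$ only in columns $i$ and $j$, the update leaves every entry of the top row fixed except those in columns $i$ and $j$, which are replaced by $a_i \cos\varphi + a_j \sin\varphi$ and $-a_i \sin\varphi + a_j \cos\varphi$ respectively, where $a_i, a_j$ denote the old entries in those two columns.

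Next I would classify the effect on the support $S \subseteq \{1,\dots,n\}$ of the top row (with $|S| = k$) according to whether $a_i$ and $a_j$ vanish. If both vanish, the two new entries stay zero; if both are non-zero, the two new entries are, for all but finitely many $\varphi$, again non-zero; in either case $|S|$ is unchanged. If exactly one of $a_i, a_j$ is non-zero, then both updated entries become non-zero (again for all but finitely many $\varphi$), so $|S|$ increases by exactly one. Hence, conditioned on the chosen pair $(i,j)$, the support grows by one precisely when the pair meets $S$ in exactly one point and is otherwise unchanged.

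The main obstacle is making the phrase ``for all but finitely many $\varphi$'' into an almost-sure statement, because the entries $a_i, a_j$ are themselves complicated random functions of the previously sampled pairs and angles, and one must rule out an exact cancellation occurring with positive probability. I would handle this by conditioning on the entire history through step $t$: for any realized values $a_i, a_j$, each of the linear forms $a_i\cos\varphi + a_j\sin\varphi$ and $-a_i\sin\varphi + a_j\cos\varphi$ vanishes for only finitely many $\varphi \in [0,2\pi)$ unless $a_i = a_j = 0$, so conditional on the history the cancellation event has probability zero under the uniform draw of the fresh angle $\varphi^{(t)}$. Averaging over histories and taking a union bound over the countably many steps shows that almost surely no entry ever cancels, so the support dynamics follow the deterministic rule above with probability one.

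Finally I would count and assemble the recurrence. Because the transition of $|S|$ depends on $(i,j)$ only through how many of $i,j$ lie in $S$, and there are $k(n-k)$ pairs meeting $S$ in exactly one point, $\binom{k}{2}$ pairs inside $S$, and $\binom{n-k}{2}$ pairs outside $S$ (together summing to $\binom{n}{2}$), the support grows by one with probability $k(n-k)/\binom{n}{2}$ and is unchanged with probability $(\binom{k}{2}+\binom{n-k}{2})/\binom{n}{2}$. Crucially these probabilities depend on $S$ only through $k = |S|$, so the count of non-zeros is itself Markov, which justifies writing $p(t,k)$ as a function of $k$ alone. Conditioning on the state at time $t$, the event that the top row has $k+1$ non-zeros at time $t+1$ splits into ``had $k+1$ and stayed'' and ``had $k$ and grew,'' giving exactly the stated recurrence, with base case immediate from $A^{(0)} = I_n$, whose top row has a single non-zero entry.
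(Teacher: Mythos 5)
Your proof is correct and follows essentially the same route as the paper's: analyze how right-multiplication by $G_n(i,j,\varphi)$ affects only columns $i$ and $j$ of the top row, case-split on whether $a_i,a_j$ vanish, and count the pairs $(i,j)$ meeting the support in zero, one, or two points. The only difference is that you spell out more carefully the almost-sure non-cancellation (conditioning on the history and noting each linear form in $\varphi$ vanishes at only finitely many angles) and the fact that the transition law depends on the support only through its size, points the paper's proof treats informally.
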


\begin{proof}
    The base condition is trivially true since $A^{(0)}$ is the identity matrix. To prove the recurrence relation, we examine the behavior of the top row of $A^{(t)}$ when we multiply the matrix by $G^{(t)}$. Let's assume that $a$ is a row vector of width $n$, constituting the top row of matrix $A^{(t)}$, and that $G^{(t)} = G_n(i, j, \varphi)$.
    If $l \notin \{i, j\}$, then we can observe that $a_l$ will not change as a result of the multiplication:
\begin{align*}
    \left(a \cdot G^{(t)}\right)_l = a_l
\end{align*}
Otherwise, we can write:
\begin{align*}
&\left(a \cdot G^{(t)}\right)_{i} = a_i \cos \varphi + a_j\sin \varphi\\
&\left(a \cdot G^{(t)}\right)_{j} = -a_i \sin \varphi + a_j\cos \varphi
\end{align*}
If both $a_i$ and $a_j$ are zero, then $(a \cdot G^{(t)})_i$ and $(a \cdot G^{(t)})_j$ will be zero. Otherwise, these values depend on the value of $\varphi$. If both $a_i$ and $a_j$ are non-zero, then $(a \cdot G^{(t)})_i$ and $(a \cdot G^{(t)})_j$ will be non-zero with probability 1. Finally, if exactly one of $a_i$ or $a_j$ is zero, then both $(a \cdot G^{(t)})_i$ and $(a \cdot G^{(t)})_j$ will be non-zero with probability 1.
Thus, the number of non-zeros in vector $a$ can either remain the same or increase by one after multiplication by $G^{(t)}$. The probability that it remains the same is equal to the probability that $i$ and $j$ are such that $a_i$ and $a_j$ are both non-zero or both zero. On the other hand, the probability that it increases by one is equal to the probability that exactly one of $a_i$ or $a_j$ is non-zero. By calculating these probabilities and using conditional probabilities to express $p(t+1, k+1)$ in terms of $p(t, k+1)$ and $p(t, k)$, we arrive at equation \ref{eq:recur}.
\end{proof}

\begin{theorem} \label{thm:expected}
    The expected density of $A^{(t)}$ can be expressed by the following formula:
    \begin{align} \label{eq:expected}
        \E[ \mathrm{dens}(A^{(t)}) ] = \frac{1}{n} \cdot \sum_{k=1}^n k \cdot p(t, k)
    \end{align}
\end{theorem}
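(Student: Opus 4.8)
The plan is to reduce the statement to Theorem~\ref{thm:ptk} by combining the definition of matrix density with linearity of expectation and a row-symmetry argument. First I would express the density in terms of per-row counts: letting $N_r$ denote the (random) number of non-zero entries in the $r$-th row of $A^{(t)}$, the total number of non-zeros is $\sum_{r=1}^n N_r$, and since an $n \times n$ matrix has $n^2$ entries, $\mathrm{dens}(A^{(t)}) = \frac{1}{n^2}\sum_{r=1}^n N_r$. Taking expectations and using linearity gives $\E[\mathrm{dens}(A^{(t)})] = \frac{1}{n^2}\sum_{r=1}^n \E[N_r]$. It then remains to evaluate $\E[N_r]$.

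The key observation is that $\E[N_r]$ is the same for every row, so that each summand equals $\E[N_1]$, the expectation for the top row. I would justify this by exhibiting a symmetry of the construction under relabeling of the coordinate indices. Writing $M = \prod_s G^{(s)}$ for the sampled product and $P$ for the permutation matrix of an arbitrary permutation $\sigma$, conjugating each factor yields $P\, G_n(i,j,\varphi)\, P^{T} = G_n(\sigma(i),\sigma(j),\pm\varphi)$, i.e.\ again a Givens matrix with permuted indices. Because the pairs $(i,j)$ are drawn uniformly over all $\binom{n}{2}$ index pairs, this distribution is invariant under $\sigma$, and hence $M$ and $P M P^{T}$ have the same law. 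Since $(P M P^{T})_{rc} = M_{\sigma^{-1}(r),\sigma^{-1}(c)}$, the support size of row $r$ of $P M P^{T}$ equals that of row $\sigma^{-1}(r)$ of $M$; equality in distribution then forces $N_r$ and $N_{\sigma^{-1}(r)}$ to be equidistributed for every $\sigma$, so all rows are identically distributed. This is precisely the claim, asserted in the Observation, that the probabilities $p(t,k)$ do not depend on the choice of row.

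Granting row-symmetry, the rest is routine. Substituting $\E[N_r]=\E[N_1]$ for all $r$ collapses the sum to $\E[\mathrm{dens}(A^{(t)})] = \frac{1}{n^2}\cdot n \cdot \E[N_1] = \frac{1}{n}\,\E[N_1]$. Finally, since $N_1$ takes values in $\{1,\dots,n\}$ with $\Pr(N_1=k)=p(t,k)$ by the definition in Theorem~\ref{thm:ptk}, we have $\E[N_1]=\sum_{k=1}^n k\cdot p(t,k)$, which upon substitution gives $\E[\mathrm{dens}(A^{(t)})] = \frac{1}{n}\sum_{k=1}^n k\cdot p(t,k)$, as desired. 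I expect the only genuine subtlety to be the row-exchangeability step; everything else is bookkeeping with linearity of expectation. A reader content to take the Observation's row-independence claim as given could skip the conjugation argument entirely, in which case the proof is immediate.
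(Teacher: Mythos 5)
Your proof is correct and follows essentially the same route as the paper's: write the density as $\frac{1}{n^2}\sum_{r} N_r$, apply linearity of expectation, invoke row-symmetry to reduce everything to the top row, and compute $\E[N_1]=\sum_{k=1}^n k\cdot p(t,k)$. The only difference lies in how row-symmetry is justified --- the paper simply notes that the recurrence and base case of Theorem~\ref{thm:ptk} hold verbatim for every row, whereas you give a slightly more formal exchangeability argument via conjugation by permutation matrices (using $P\,G_n(i,j,\varphi)\,P^{T}=G_n(\sigma(i),\sigma(j),\pm\varphi)$ and the permutation-invariance of the sampling distribution), which is equally valid.
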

\begin{proof}
    In Theorem \ref{thm:ptk}, the values $p(t,k)$ pertain to the top row of $A^{(t)}$. However, due to symmetry, they can be applied to any row of $A^{(t)}$. In fact, the basis of Formula \ref{eq:recur} remains the same for any row, and for any $t$, multiplication by $G^{(t)}$ independently affects every row of $A^{(t)}$, regardless of the spatial distribution of non-zeros in the row (only the number of them is relevant). Consequently, the expected density of the entire matrix $A^{(t)}$ should be equal to the expected density of any of its rows. Below, we provide a straightforward derivation of this fact, with $a$ representing the top row of $A^{(t)}$.
\begin{align*}
    \E[ \mathrm{dens}(A^{(t)}) ] = \E & \left[ \frac{\# \{(i, j): A^{(t)}_{ij} \neq 0\}}{n^2} \right]  = 
    \sum_{i=1}^n \E \left[ \frac{\# \{j: A^{(t)}_{ij} \neq 0 \}}{n^2} \right] = \\[6pt]
    \frac{1}{n} \cdot \E & \left[\# \{j: A^{(t)}_{1j} \neq 0 \} \right] = 
    \E[ \mathrm{dens}  (a) ] = \frac{1}{n} \cdot \sum_{k=1}^n k \cdot p(t, k)
\end{align*}
As we can see, by expressing $\E[\mathrm{dens}(a)]$ as a weighted sum of probabilities, we arrive at \ref{eq:expected}.
\end{proof}

Theorems \ref{thm:ptk} and \ref{thm:expected} can be employed to calculate the expected densities of Givens products. Given the size of the matrices involved $n$ and an upper bound on the number of rotations $T$, we can compute all the values $p(t, k)$ for $t \in \{0, ..., T\}$ and $k \in \{1, ..., n\}$ in $O(T \cdot n)$ time using dynamic programming based on recurrence relation \ref{eq:recur}. Afterward, for each $t \in \{0, ..., T\}$, we can substitute the values $p(t, k)$ into equation \ref{eq:expected} to determine the desired expected densities.

\begin{figure}
    \centering
    \includegraphics[width=0.65\linewidth]{exp_vs_emp_rotations.pdf}
    \caption{Expected density (red) of a sparse matrix produced by Algorithm~\ref{alg:EOI} as as a function of the number of applied Givens rotations. Blue curve represents the empirical evaluation. }
    \label{fig:expected_density}
    \vskip -0.3in
\end{figure}

\section{Additional Loss Plots for Tanh and Hard Tanh}
\label{app:losses}

In addition to the plots from Section~\ref{section:study_of_singular}, we also present the loss curves for the remaining two studied activation functions: the Tanh and Hard Tanh. Note that the Synflow method is not compatible with such activations, and hence we do report the results on this density distribution. 

Similarly to the observations made in the main text, we observe that using orthogonal initializations helps to achieve better training dynamics. In addition, we also notice that sparse learning approaches based on weight initialization and function preservation such as SNIP and GraSP benefit from better learning curves than the random initializations.  

\begin{figure}[ht]
    \centering\includegraphics[trim=160 0 80 20,clip,width=\linewidth]{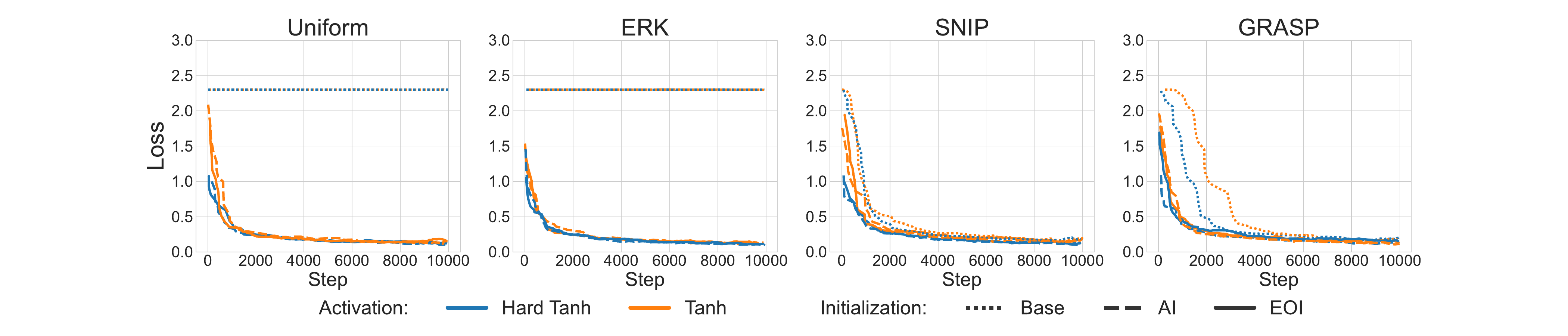}
    \caption{Training loss curve for sparsity 0.95 for Tanh and Hard Tanh activation functions.}
    \label{fig:app_loss_plots}
\end{figure}

\section{Additional Results on Contemporary Architectures}
\label{app:additional_practical_benefits}

\begin{table}[]
    \centering
    \caption{Test accuracy (with standard deviation) of various convolutional architectures on dedicated datasets, density 5\%. We bold-out the best result within each density distribution method and architecture, and underline the best overall result within each architecture.}
\scalebox{0.8}{
\begin{tabular}{llllll}
\toprule
        &               &     \textbf{ResNet32} &     {\bf ResNet56} &    {\bf ResNet110} &        {\bf VGG-16}  \\
{\bf Method} & {\bf Init} &   (CIFAR10)           &    (CIFAR10)          &  (CIFAR10)            &       (CIFAR10)                  \\
\midrule
ERK     & Base       & 87.11(0.29)          & 89.11(0.28)          & 90.38(0.25)          & 90.06(0.20)          \\
        & AI         & 87.25(0.16)          & 88.86(0.40)          & 90.21(0.27)          & 90.38(0.23)          \\
        & EOI        & \textbf{87.38(0.17)} & \underline{\textbf{89.40(0.20)}} & \textbf{90.52(0.24)} & \textbf{91.03(0.16)} \\
\midrule
GraSP   & Base       & 86.84(0.72)          & 88.89(0.16)          & 90.42(0.35)          & 11.81(4.04)          \\
        & AI         & 86.78(0.41)          & 89.14(0.28)          & 90.35(0.27)          & 11.79(4.01)          \\
        & EOI        & \textbf{87.40(0.41)} & \textbf{89.36(0.16)} & \textbf{90.66(0.21)} & \textbf{90.03(4.36)} \\
\midrule
SNIP    & Base       & 86.81(0.25)          & 88.46(0.25)          & 90.03(0.15)          & 92.21(0.32)          \\
        & AI         & 86.83(0.17)          & \textbf{88.53(0.35)}         & \textbf{90.18(0.32)}          & \underline{\textbf{92.23(0.35)}} \\
        & EOI        & \textbf{86.85(0.41)} & 57.09(42.99)         & 25.90(35.54)         & 91.97(0.16)          \\
\midrule
Synflow & Base       & 86.45(0.27)          & 88.32(0.18)          & 89.85(0.26)          & 91.36(0.09)          \\
        & AI         & 86.27(0.30)          & 88.50(0.18)          & 89.96(0.33)          & 91.36(0.32)          \\
        & EOI        & \underline{\textbf{87.47(0.22)}} & \textbf{89.20(0.16)} & \underline{\textbf{90.74(0.16)}} & \textbf{91.86(0.23)} \\
\midrule
Uniform & Base       & 85.43(0.21)          & 87.67(0.32)          & 89.52(0.26)          & 88.46(0.18)          \\
        & AI         & 85.33(0.46)          & 87.66(0.20)          & 89.34(0.26)          & 88.55(0.14)          \\
        & EOI        & \textbf{86.27(0.31)} & \textbf{88.49(0.28)} & \textbf{89.99(0.09)} & \textbf{89.56(0.09)} \\
\midrule
\midrule
Dense   & Base       & 92.68(0.19)          & 93.01(0.23) & 92.52(0.47)          & 92.84(0.08) \\
        & Orthogonal & 92.73(0.33) & 92.59(0.87)          & 92.79(0.85) & 92.84(0.23) \\

\bottomrule
\end{tabular}}
    \label{tab:density_0.05_practical}
\end{table}

Apart from the main results for density 10\% in Section~\ref{section:practical_benefits}, we also investigate the performance of the sparse initialization methods on the ResNets and VGG models for density 5\%/ The results of this study are presented in Table~\ref{tab:density_0.05_practical}.

Similarly as in the main text, we observe that the overall best performances for each model are given by the orthogonal methods. Furthermore, the \methodname{} schemes is most often the best choice, irrespective of the used density distribution algorithm.  Although our method has presented poor results when tested with SNIP, we managed to restore reasonable performance by lowering the learning rate to $10^{-2}$. In this setting, we observed results of 85.14(0.52) and 85.81(0.28) for ResNet56 and ResNet110 respectively.

\section{Details of Exact Orthogonal Initialization for Convolutions} \label{app:details}

In order to produce a weight tensor $\mathbf{W}^l$ of shape $c_{out}\times{c_{in}}\times(2k+1)\times(2k+1)$ of a convolutional layer, EOI first samples am orthogonal matrix $\mathbf{H}^l\in \mathcal{R}^{c_{out}\times c_{in}}$ generated by Algorithm~\ref{alg:ortho} and sets:
\begin{align}
    \mathbf{W}^l_{i, j, p, q} = \begin{cases}
        \mathbf{H}^l_{i, j} \hspace{20pt} \text{if} \hspace{6pt} p = q = k, \\
        0 \hfill \text{otherwise}.
    \end{cases}
\end{align}

This step is adapted from  \textit{delta-orthogonal} initialization \citep{xiao2018dynamical}, but uses a sparse matrix $\mathbf{H}^l$ instead of a dense one. In the next step, we need to establish the sparse mask $\mathbf{M}^l$. Initially, we set:
\begin{align}
    \mathbf{M}^l_{i, j, p, q} = \begin{cases}
        1 \hspace{20pt} \text{if} \hspace{6pt} \mathbf{W}^l_{i, j, p, q} \neq 0, \\
        0 \hfill \text{otherwise}.
    \end{cases}
\end{align}
Note, however, that if $\mathbf{H}^l$ was sampled with density $d^l$, then the above equation would produce a mask with density $d^l/(2k+1)^2$. This is too low since we wanted the density of the mask (not its kernel) to be equal to $d^l$. Therefore we assume that the density of matrix $\mathbf{H}^l$ is selected to ensure that the combined density of $\mathbf{W}^l$ with embedded $\mathbf{H}^l$ does not exceed the specified target density $d^l$. To achieve the desired density, we must convert a certain number of zero entries in $\mathbf{M}^l$ into ones. We achieve it by uniformly selecting a subset of zero entries in $\mathbf{M}^l$ of the appropriate size and transforming them all to ones. Figure \ref{fig:unfreezing} provides a visual representation of this process.

Please note that the generated weight tensor and mask produce an orthogonal convolutional layer that aligns with the norm-preserving definition outlined in Section \ref{sec:ortho_convs}. The orthogonality of weights is a direct result of employing the \textit{delta-orthogonal} initialization, which inherently satisfies the defined criteria. Notably, the mask does not prune any non-zero entries of the weights. Consequently, the orthogonality of $\mathbf{W}^l$ implies the orthogonality of $\mathbf{M}^l \odot \mathbf{W}^l$.

\begin{figure}
    \centering
    \includegraphics[width=0.6\textwidth]{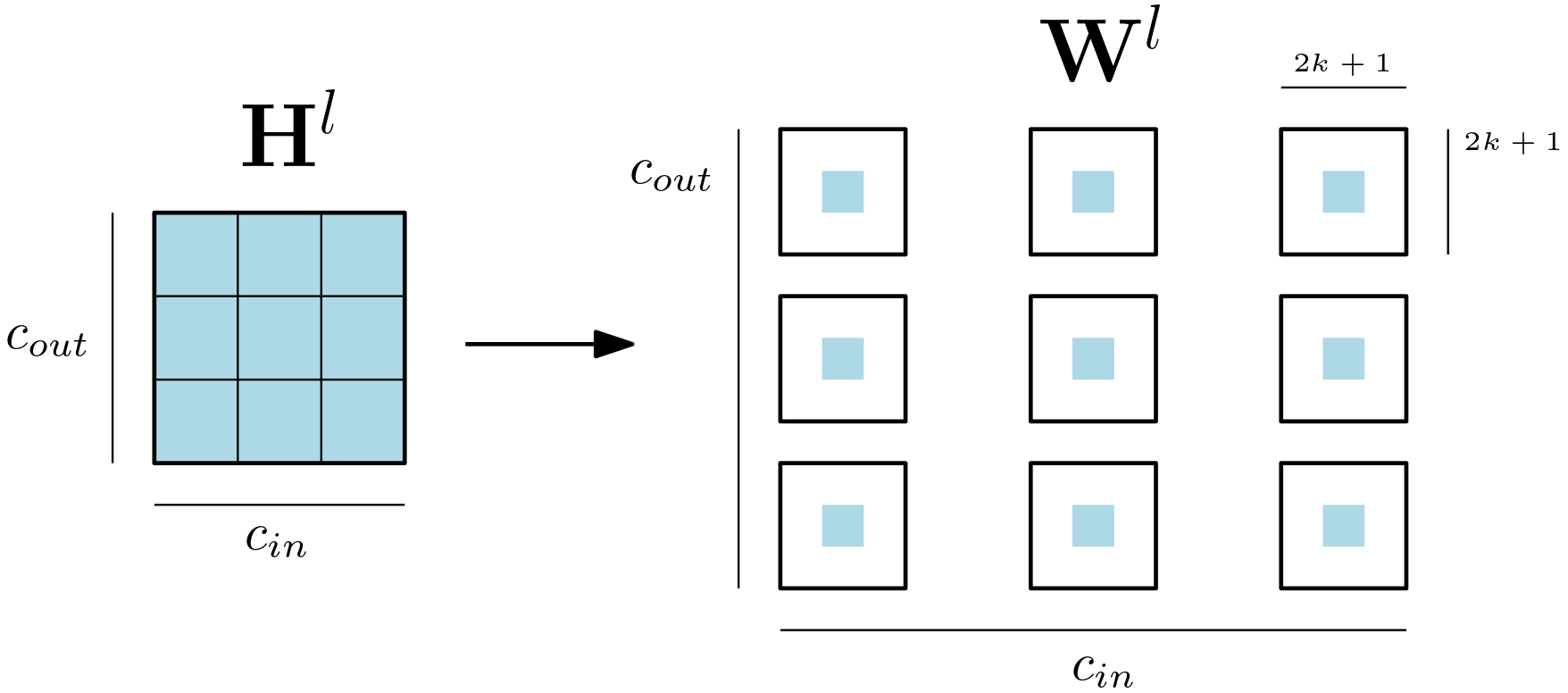}
    \caption{The embedding of an orthogonal matrix $\mathbf{H}^l$ in the convolutional tensor $\mathbf{W}^l$. Blue entries in $\mathbf{W}^l$ refer to the embedded entries of $\mathbf{H}^l$. White entries in $\mathbf{W}^l$ denote the remaining elements which are zero.}
    \label{fig:embeddedH}
    \vskip -0.1in
\end{figure}

\begin{figure}
    \centering
    \includegraphics[width=1.0\textwidth]{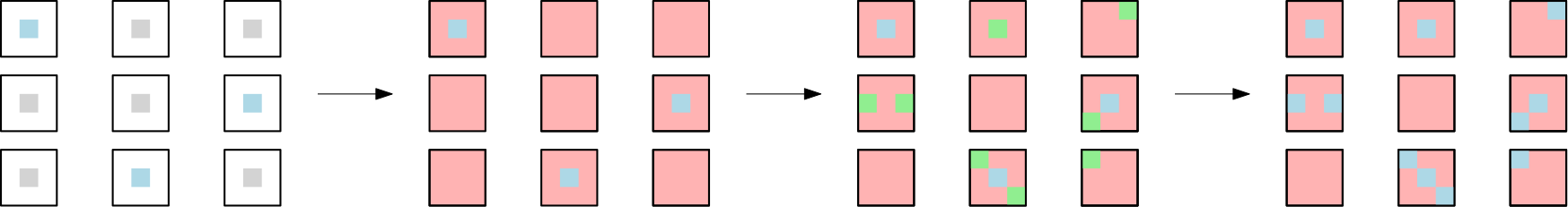}
    \caption{The process of sparse mask generation for convolutional weights. The left-most image depicts the weight matrix obtained by the embedding from Figure~\ref{fig:embeddedH}. Blue and gray entries correspond to the embedded entries of $\mathbf{H}^l$, blue representing the non-zero entries and gray representing the zeros. Next, the initial mask is applied, which prunes all zero entries (marked with red). Then, a subset of initially pruned entries is selected (marked with green) and becomes unpruned so that $\mathbf{M}^l$ matches the target density $d^l$. The final mask is depicted on the right-hand side, with blue entries being unpruned. In addition, note that entries that are not the centers of the kernels (i.e., are different from $W_{i,j,k,k}$ for any $k$)  are set to zero, even if they are unpruned.
    }
    \label{fig:unfreezing}
   \vskip -0.2in
\end{figure}

\section{Impact of the Rotation Angle on the \methodname{} algorithm}

One crucial variable of the distribution over Givens matrices is the angle of the rotation, denoted with $\varphi$. In algorithm \ref{alg:ortho}, we sampled the angles of the Givens matrices from the uniform distribution. One might wonder how the performance of our method would change if we preferred some angles more than others. In order to verify the impact of $\varphi$ on the performance of our $\methodname{}$, we run an experiment on the ResNet32 model with density 0.1 comparing three different distributions of Givens matrices: "random-angle", which corresponds to the default uniform distribution used in algorithm \ref{alg:ortho}, "small-angle", which uses a fixed value of $\varphi = \pi / 180$, and "large-angle", which uses a fixed $\varphi = \pi \cdot 80 / 180$. 

We present the training loss and evaluation accuracy curves at the initial stage of the training in Figure~\ref{fig:degrees_experiments}. We observe that the "small-angle" method suffers from much worse convergence and higher training loss in the initial steps of the training. This indicates that such a degenerated distribution of Givens rotations isn't preferred and that larger values of $\varphi$ are more beneficial for efficient training. 

\begin{figure}
    \centering
    \includegraphics[width=0.45\textwidth]{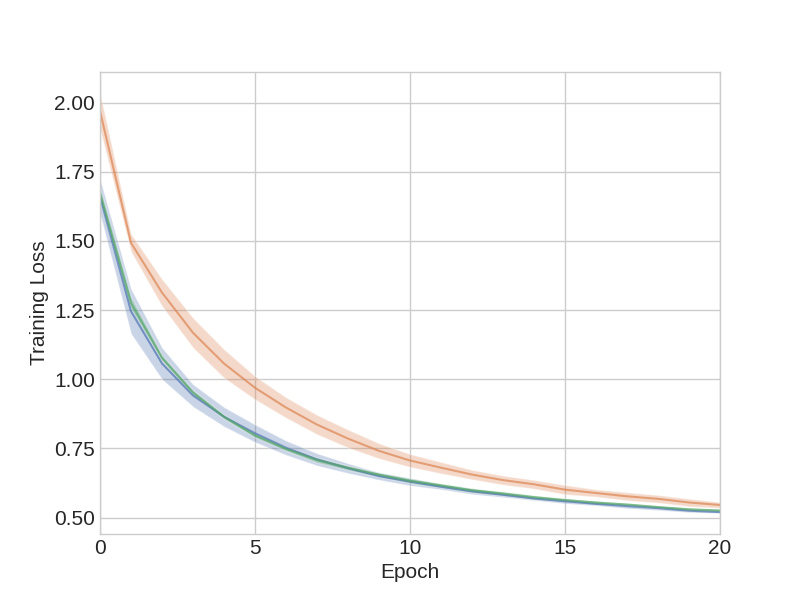}
    \includegraphics[width=0.45\textwidth]{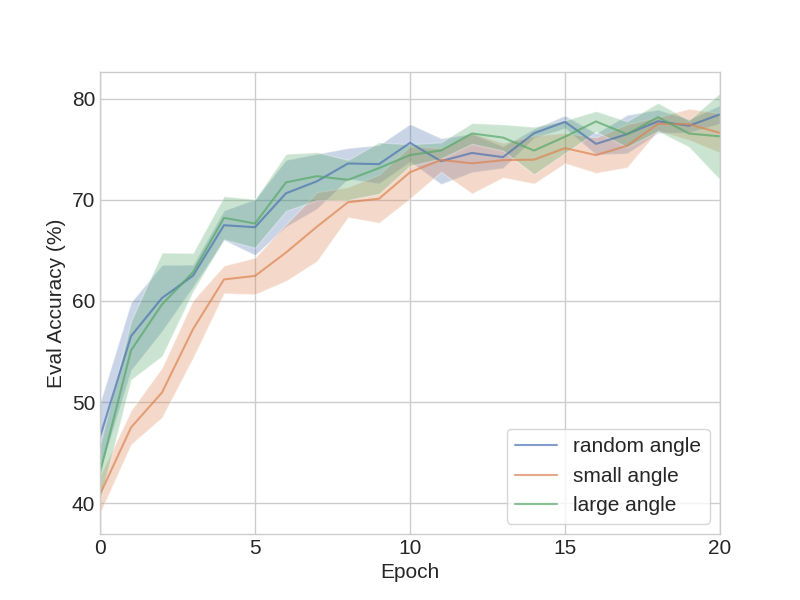}
    \caption{
    Mean training loss and mean evaluation accuracy curves for the first 20 epochs of training on ResNet32 with three different methods of sampling Givens matrices: "random angle", "small angle" and "large angle". Despite the differing performance in the initial stage of training, all three methods managed to finish with competitive test accuracies after 200 epochs of training - random angle: 89.58(0.09), small angle: 89.65(0.11), large angle: 89.87(0.22)
    }
    \label{fig:degrees_experiments}
\end{figure}

\section{Description of SAO}
\label{app:sao}

In this section, we provide a brief description of the SAO method~\citep{esguerra2023sparsity}. The SAO algorithm is designed by leveraging a construction based on Ramanujan expander graphs. Let $G(U,V,E)$ be a $(c,d)$-regular graph, where $c$ is the degree of the input nodes, and $d$ is the degree of the output nodes. The sets $U$ and $V$ are disjoint, and any edge $(u,v)\in E$ must connect only nodes belonging to different sets. The connectivity between the sets is defined by the bi-adjacency matrix $B\in \{0,1\}^{u\times v}$, where $u=|U|$ and $v=|V|$. It is assumed that the bi-adjacency matrix is such that the full adjacency matrix of graph $G$ is a Ramanujan expander.\footnote{Given a full adjacency matrix $A$ of a $d$-regular graph, the graph is said to be a Ramanujan expander if its two largest eigenvalues, $\lambda_1$, $\lambda_2$, satisfy $|\lambda_1-\lambda_2|\le1-\frac{\lambda_2}{d}$ and $\lambda_2\le 2\sqrt(d-1)$\citep{esguerra2023sparsity}.} The SAO algorithm constructs the mask $M$ for a $m \times n$ fully-connected layer to be the transpose of the bi-adjacency matrix of a $(c,d)$-regular Ramanujan expander graph. The sets $U$ and $V$ correspond to the input and output of the layer. For $m \ge n$ and a given degree $c$, the output degree is set to $d=\frac{dm}{n}$. Next, the matrix $M$ is constructed by firstly building a $(c, 1)$-regular matrix $M_1\in\{0,1\}^{(n/d)\times n}$ and then concatenating its $(\frac{dm}{n}-1)$ copies along the vertical axis (See Figure~\ref{fig:eoi_sao} for a visual comparison of examples of masked returned by SAO and EOI). Next, given $r=\frac{cn}{m}$,  SAO constructs the sparse initialization matrix $S$ by generating $n/r$ sets. Each such set is composed of $r$ orthogonal vectors with length equal to the specified degree $c$ and corresponds to one set of orthogonal columns. The values of those vectors are then assigned to matrix $S$ accordingly to the corresponding non-zero entries of the matrix $M$ (see Figure 4 in~\cite{esguerra2023sparsity} for visualization).\footnote{In case $n<m$, the algorithm proceeds analogously by building a $(1,d)$-regular matrix, performing the concatenation along the horizontal axis, and constructing $m/r$ sets.}

The density of a
$(c, d)$ regular layer with weights $S\in\mathcal{R}^{m\times n}$ is given by $c/m=d/n$, where $c$ and $d$
refer to the degree of the input and output nodes, respectively. From here, it is evident that SAO introduces constraints on the possible level of sparsity, as well as the sizes of layer weights. In particular, the above construction requires that the larger dimension of the layer weights must be divisible by the specified degree $c$. Secondly, $\frac{n}{m}$ should be equal to the degree divided by some integer $r\in\mathcal{Z}$, to guarantee a degree of at least $1$ in the larger dimension. 

In the case of a convolutional layer,  the local $k \times k$ kernel already corresponds to the $(c, d)$-sparse connection. In consequence, unlike \methodname{}, the sparsity of mask $M$ in SAO for convolutions is controlled by pruning \emph{entire} kernels in channels. In particular, given convolution weights of shape $c_{out}\times c_{in} \times (2k+1) \times (2k+1)$ the SAO-Delta scheme first samples a sparse orthogonal matrix $S^l\in\mathcal{R}^{c_{out}\times c_{in}}$ as described above, and then assigns $W^l_{i,j,k,k} = S_{i,j}$ and $W^l_{i,j,p,q} = 0$ if any of $p$ or $q$ is different than $k$. Each entry $S_{i,j}=0$ will prune the \emph{entire} kernel $W^l_{i,j}$. For more details on SAO refer to~\cite{esguerra2023sparsity}.

\begin{figure}
    \centering
    \includegraphics[width=0.5\textwidth]{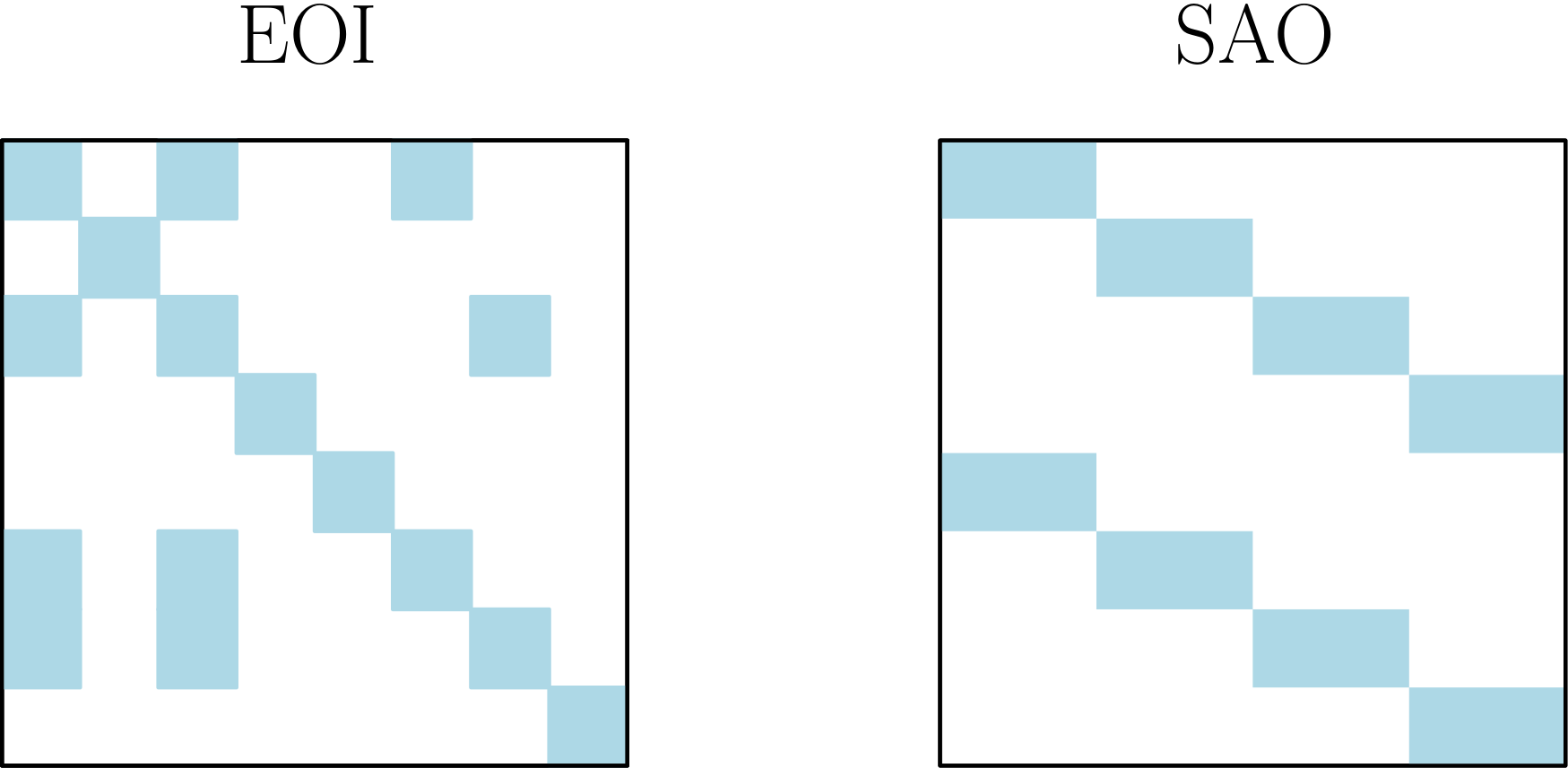}
    \caption{The visualization of example mask matrices returned by EOI (left) and SAO (right). The blue-shaded entries indicate the non-zero values.}
    \label{fig:eoi_sao}
    \vskip -0.2in
\end{figure}

\section{Comparison with Dynamic Sparse Training}

In addition to the experiments conducted in the main paper, we also compare the performance of our method to two commonly used dynamic sparse training approaches - SET~\citep{mocanu2018scalable} and RigL~\citep{evci2020rigging}. 

In dynamic sparse training, contrary to static sparse training, the initial sparse mask is allowed to change during the training. To be precise, given a masked model with density $d$, the sparse connectivity is updated after every $\Delta t$ training iterations. This update is implemented by first selecting a percentage $p\%$ of the active weights according to some pruning criterion. Next, the removed weights are replaced by new connections added accordingly to some growth criterion. For SET, the growth criterion is simply random sampling, while RigL uses the gradient norm. The total density of the model remains unchanged. The initial sparse connectivity for the DST algorithm is typically the ERK-scheme~\citep{evci2020rigging}.

We run an experiment in which we compare the performance of SET and RigL against the ERK-Base, ERK-AI and ERK-EOI approaches on ResNet56 trained with CIFAR10. For the dynamic sparse training methods, we use an updated period of $\Delta t=800$ and pruning percentage $p=50\%$ which we anneal using the cosine schedule as in~\citet{evci2020rigging}. All other hyperparameters are adapted from the static sparse training regime. The results are in Figure~\ref{fig:dst_comp}. We observe that the dynamic sparse approaches indeed result in better test accuracy (as expected looking at the results of~\cite{mocanu2018scalable, evci2020rigging}). However, our \methodname{} initialization scheme, which is the best among the static sparse training methods, obtains results with low variance that are only slightly worse and still within a standard deviation from RigL and SET.
At the same time, let us note that any static sparse training method can be used as an initialization point for dynamic sparse training. We consider the applicability of EOI in dynamic sparse training an interesting area for future studies.
\begin{figure}
    \centering
    \includegraphics[width=0.5\textwidth]{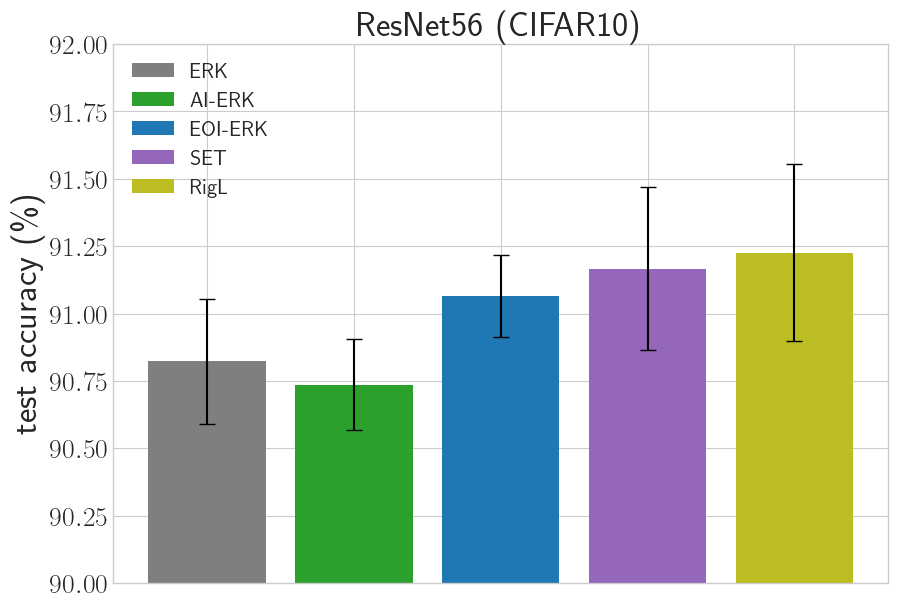}
    \caption{The CIFAR10 test accuracy obtained by the ERK-Base, ERK-AI, and ERK-EOI static pruning approaches and the SET and RigL dynamic sparse training methods. The results were computed using the ResNet56 architecture.}
    \label{fig:dst_comp}
    \vskip -0.2in
\end{figure}

\section{The Price of High Sparsity}

In order to better understand how different sparsity levels affect our algorithm we conduct an experiment on the ResNet56 architecture in which we compare the performance of ERK-Base, ERK-AI, and ERK-EOI schemes for varying densities.

We present the results of this setup in Figure~\ref{fig:price_for_sparsity}. We observe that across different densities, the best performance is given by ERK-EOI, with other methods either performing visibly worse or not holding a clear advantage. We also notice that the benefit of using EOI is the largest for the lowest densities. This is expected when compared with the results from Figure 3, in which we see that the signal propagation suffers the most in the high sparsity regime and that only EOI is able to maintain good statistics of the singular values for sparsities larger than 0.8. 

\begin{figure}[t]
    \centering
    \includegraphics[width=0.4\textwidth]{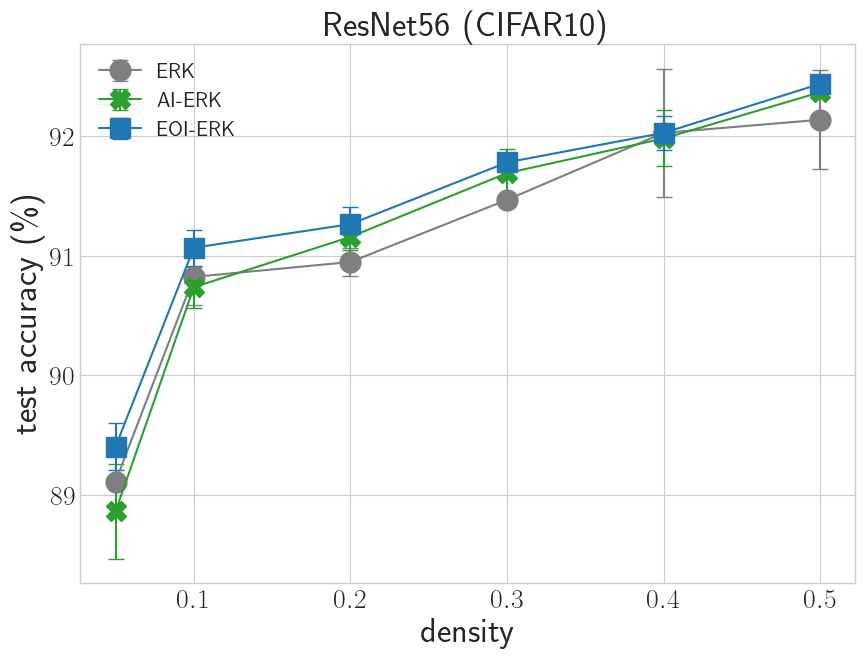}
    \caption{Test accuracy of the ResNet56 architecture trained at various sparsity levels using different sparse initialization schemes. The accuracy slowly drops between the 50\% and 10\% sparsity levels. Then it drops suddenly when the sparsity changes from 10\% down to 3\%.}
    \label{fig:price_for_sparsity}
\end{figure}

\section{Additional Running Time Analysis}
\label{section:runtime_deltas}

In~\Cref{sec:efficiency} we studied the running time of the different orthogonal schemes when requiring a single sparse orthogonal matrix. In addition to that isolated study we also run and 
additional experiment in which we measure the runtime after the first epoch of training for the VGG, ResNets and EfficientNet models trained on CIFAR-10 and Tiny ImageNet. We report the results in~\Cref{tab:runtime_deltas}. Our findings indicate that the overhead of running EOI is considerably lower than that of AI across all density-distribution methods,  which also confirms the results on square matrices from~\Cref{fig:ortho_time_density_plot}. 

\begin{table*}[t]
    \centering
    \caption{
    Runtimes after the first epoch measured for different models and initialization schemes for density=$0.1$ (with standard deviation). For the AI and \methodname{} schemes, we report the delta to the "Base" initialization.
    }
    \label{tab:runtime_deltas}
% \scalebox{0.80}{
\scalebox{0.6}{
\setlength{\tabcolsep}{20pt}

\begin{tabular}{lllllll}
\toprule
        &               &     \textbf{ResNet32} &     {\bf ResNet56} &    {\bf ResNet110} &        {\bf VGG-16} & {\bf EfficientNet} \\
{\bf Method} & {\bf Init} &   (CIFAR10)           &    (CIFAR10)          &  (CIFAR10)            &       (CIFAR10)        &   (Tiny ImageNet)           \\
\midrule
            ERK &         Base &    27.25 (0.64) &    34.85 (1.76) &     48.87 (0.84) &  24.57 (0.45) &     67.97 (2.63) \\
             &   $\Delta$AI &    92.91 (1.55) &   157.63 (3.67) &    303.43 (6.36) &  51.02 (1.69) &   253.14 (55.19) \\
            &  $\Delta$EOI &     0.40 (2.38) &    -0.95 (2.31) &      2.08 (0.43) &   6.52 (0.95) &     21.30 (5.90) \\
\midrule
   GraSP &         Base &    60.53 (1.49) &    90.36 (1.95) &    157.69 (0.81) &  45.79 (1.08) &   1125.80 (4.72) \\
    &   $\Delta$AI &    96.44 (0.15) &   161.12 (1.14) &    312.62 (7.52) &  50.36 (1.09) &   248.02 (39.48) \\
     &  $\Delta$EOI &    -0.29 (1.98) &    -0.09 (2.46) &      2.68 (1.16) &   5.92 (1.34) &    26.65 (12.25) \\
\midrule
    SNIP &         Base &    28.88 (2.03) &    34.36 (1.68) &     49.20 (1.60) &  25.00 (1.19) &     70.58 (2.27) \\
     &   $\Delta$AI &    96.75 (1.10) &   163.55 (4.07) &    312.98 (5.28) &  50.74 (1.03) &   249.17 (37.76) \\
      &  $\Delta$EOI &    -0.22 (1.97) &     0.17 (1.86) &      1.83 (2.62) &   5.11 (0.71) &     17.49 (4.67) \\
\midrule
 Synflow &         Base &    77.25 (1.72) &    93.78 (4.08) &    106.95 (3.05) &  80.30 (1.65) &   133.52 (11.60) \\
   &   $\Delta$AI &   99.24 (12.60) &   161.68 (7.93) &    308.99 (8.47) &  55.96 (4.62) &   229.97 (11.79) \\
   &  $\Delta$EOI &    -3.49 (5.85) &   -14.43 (2.36) &     -1.32 (2.31) &   9.64 (3.99) &     33.06 (8.83) \\
\midrule
        Uniform &         Base &    28.24 (1.49) &    34.84 (0.88) &     49.35 (0.90) &  24.58 (0.93) &     68.81 (1.06) \\
          &   $\Delta$AI &    93.74 (3.21) &   155.45 (3.23) &    303.63 (7.03) &  49.76 (1.04) &  274.94 (106.55) \\
        &  $\Delta$EOI &    -1.61 (0.24) &    -0.67 (0.85) &      3.93 (7.97) &   5.67 (0.64) &     20.91 (4.34) \\
\bottomrule

\bottomrule
\end{tabular}}
\vskip -0.2in
\end{table*}

\end{document}